\newtheorem{theorem}{Theorem}
\newcommand{\bm}{\mathbf m}
\DeclareMathOperator{\Tr}{Tr}
\DeclareMathOperator*{\argmin}{arg\,min}
\begin{document}
\title{Bayesian Fisher's Discriminant for Functional Data}
\author{
Yao-Hsiang Yang \footnote{Institute of Information Science, Academia Sinica, Taipei, Taiwan, yhyang@statistics.twbbs.org }, Lu-Hung Chen\footnote{Institute of Statistics, National Chung-Hsing University, Taichung, Taiwan, luhung@nchu.edu.tw. Corresponding author.}, Chieh-Chih Wang\footnote{Institute of Networking and Multimedia, National Taiwan University, Taipei, Taiwan, bobwang@csie.ntu.edu.tw}, and Chu-Song Chen \footnote{Institute of Information Science, Academia Sinica, Taipei, Taiwan, song@iis.sincia.edu.tw} 
}
\maketitle

\begin{abstract}

We propose a Bayesian framework of Gaussian process in order to extend Fisher's discriminant to classify functional data such as spectra and images. The probability structure for our extended Fisher's discriminant is explicitly formulated, and we utilize the smoothness assumptions of functional data as prior probabilities. Existing methods which directly employ the smoothness assumption of functional data can be shown as special cases within this framework given corresponding priors while their estimates of the unknowns are one-step approximations to the proposed MAP estimates. Empirical results on various simulation studies and different real applications show that the proposed method significantly outperforms the other Fisher's discriminant methods for functional data.
\end{abstract}

\textit{Keywords: Fisher's linear discriminant analysis, functional data, Gaussian process, Bayesian smoothing, dimension reduction}

\section{Introduction}\label{intro}

Dimension reduction techniques have become standard workhorses in the fields of pattern recognition and computer vision, which stand either directly for the purpose of recognition or indirectly for pre-processing. Two of the most popular and fundamental methods among this ever-growing family, namely Principal Component Analysis (PCA) and Linear Discriminant Analysis (LDA), both stem directly from the classical works in the field of multivariate statistical analysis. However, one must notice that most of these standard dimension reduction techniques only work well when the training sample size $n$ is large enough, especially when $n$ is much larger than the number of features $p$.

When $p$ is larger than $n$ (recently, \cite{HDLSS} coined the term \emph{High Dimension Low Sample Size}, abbreviated to \emph{HDLSS} to describe this situation), dimension reduction seems to be an intuitive solution to high-dimensional data. For example, Eigenface \cite{eigenface} and Fisherface \cite{fisherface} (which were famous in computer vision and pattern recognition societies) use PCA and LDA to perform image recognition tasks. However, the empirical results are usually far from satisfactory. In fact, these intuitive ideas could be misleading: it has been shown theoretically in the statistics society that most of the standard dimension reduction techniques  such as PCA and LDA may not be applicable for HDLSS data. For example, \cite{pcahdlss} showed that PCA may be merely a random projection in the HDLSS context, while \cite{ldahd} showed that LDA may not work better than merely random guesses when the dimension is larger than the sample size.

Although HDLSS data may arise from lots of situations, we focus on a specific problem: the data are discretization of stochastic (smooth) functions. That is, the multivariate data we obtained are first randomly drawn from some functional space (with certain smoothness assumption) and then sampled (or discretized) by some digital sampler. Such kind of data are often referred to as \emph{functional data}. Functional data arises in lots of scientific and engineering applications. For example, images are often regarded as discretization of "smooth" functions (which is the idea of image denoising), and the number of image pixels is usually much larger than the number of images. Most examples in the field of digital signal processing can be regarded as functional data. Some other examples for functional data includes mass spectrum in chemistry or energy spectrum in physics. 

The most important property of functional data is that the observations should have strong spatial or temporal correlations if they are discretized from smooth functions. Such property has been used to generalize standard dimension reduction techniques to functional data, including (but not limited to) Functional PCA \cite{fpca}, Functional Canonical Correlation Analysis (Functional CCA) \cite{fcca}, Penalized Discriminant Analysis (PDA) \cite{pda}, etc. \cite{LSSS} introduced PDA to the computer vision society and showed that PDA easily outperforms the methods based on classical multivariate methods such as Eigenface and Fisherface. Although these extensions do show great improvement over standard dimension reduction techniques for functional data, they incorporate the smoothness property of functional data in an intuitive way. In this article, we aim to provide a Bayesian framework to utilize the smoothness property of functional data in a more \emph{theoretical} fashion.

In this article we use Fisher's LDA to illustrate our Bayesian framework for two reasons. First, \cite{fpls_hall} showed that LDA (asymptotically) achieves optimal discrimination for functional data under mild conditions, even when the Gaussian assumption is violated. Hence, we don't need to compare all the possible discrimination techniques. Second, the performances of discrimination tasks are easy to evaluate: they can be evaluated by their classification accuracies. 

The contribution of this article is of two-fold: first, we propose a Bayesian smoothing approach for functional LDA and show that the proposed approach achieves best prediction accuracy among existing modifications of LDA for functional data. This indicates that our proposed Bayesian framework utilizes the smoothness property of functional data better than the other existing methods. Second, the Bayesian approach provide an easier way to select the required tuning parameters. Most of the existing approaches for functional data select their tuning parameters by cross-validation and thus require lots of computational efforts. On the other hand, the tuning parameters in Bayesian framework can be \emph{estimated} simultaneously with the other unknown parameters. Hence our proposed Bayesian approach is less computationally intensive than the existing approaches as well.

We also argue that kernel methods such as kernel principal component analysis (kernel PCA, \cite{kpca}), kernel discriminant analysis (KDA, \cite{kda}), kernel support vector machine, etc., may not be an appropriate solution for analyzing functional data. The idea of kernel trick is to project the finite dimensional multivariate data (often in $\mathbb{R}^p$) to an infinite dimensional functional space specified by a certain kernel, and then perform dimension reduction techniques on the functional space. However, when data are observed from functions, they already lie on some functional space, and projecting them to another functional space may not be helpful. The same argument has been pointed out by \cite{fsvm} as well: they showed that linear SVM often performs better than kernel SVMs for functional data. Later in our empirical studies, we will also show that kernel tricks usually does not work well for functional data no matter which kernel function is chosen.

The rest of the article is arranged as follows. Recent advances on the dimension reduction of functional data, especially for linear discriminant analysis, are reviewed in section \ref{review}. We also provide a very brief introduction to the idea of Bayesian smoothing in section \ref{review}, too. In section \ref{gplda} we introduce the proposed Bayesian framework for LDA to model functional data and show that Penalized Discriminant Analysis \cite{pda} is actually an approximation within this framework given corresponding priors. The maximum-a-posterior (MAP) estimation of the unknown parameters within the proposed framework and the selection of required tuning parameters are presented in section \ref{gplda}, too. The performances of the proposed approach are demonstrated by simulation studies in section \ref{simulation} and real-world applications in section \ref{realapp}. The conclusion remarks are drawn in section \ref{conclusion}.

\section{Literature Review}\label{review}
Recall that the Fisher's discriminant directions are the solutions of the following generalized eigenproblem:
\begin{equation}
\hat{\beta} \equiv \arg\max_{\beta}\frac{\beta^{T}\hat{\Sigma}_{b}\beta}{\beta^{T}\hat{\Sigma}_{w}\beta}.
\label{Fisherdir}
\end{equation}
where $\Sigma_{w}$ is the common covariance matrix among all classes, $\Sigma_{b}$ is the between covariance matrix which characterizes the variance of the class means:
\[
\Sigma_{b}\equiv\sum_{i=1}^{c}{\left(\mu_{i}-\bar{\mu}\right)\left(\mu_{i}-\bar{\mu}\right)^{T}},
\]
$\mu_i$'s are the means of the respective classes, and $\bar{\mu}$ is the overall mean. ($\hat{\Sigma}_{b}$ and $\hat{\Sigma}_{w}$ are corresponding estimates of $\Sigma_{b}$ and $\Sigma_{w}$.) Although Fisher's discriminant does not rely on any probability assumptions of the data, usually the parameters are still estimated by maximum likelihood estimation (MLE) which maximizes the following log-likehood function:
\begin{equation}
\ln{ L\left(\mu_i,\Sigma_{w}\right)}\propto-\frac{1}{2}\sum_{i=1}^c\sum_{j=1}^{n_i}{\left(y_{i,j}-\mu_{i}\right)^{T}\Sigma_{w}^{-1}\left(y_{i,j}-\mu_{i}\right)}-\frac{n}{2}\ln{\left|\Sigma_{w}\right|},
\label{MLE}
\end{equation}
where the log-likelihood is up to a constant and $\left\{y_{i,j}\right\}$ is the set of training samples of size $n=n_{1}+n_{2}+\cdots+n_c$. It is easy to derive that the maximizers of (\ref{MLE}) are
\begin{align*}
\hat{\mu}_{i}&=\frac{1}{n_{i}}\sum_{j=1}^{n_i}{y_{i,j}},\hspace{100pt} i=1,\cdots,c, \\
\hat{\Sigma}_{w}&=\frac{1}{n}\sum_{i=1}^c\sum_{j=1}^{n_{i}}{\left(y_{i,j}-\hat{\mu}_{i}\right)^{T}\left(y_{i,j}-\hat{\mu}_{i}\right)}.
\end{align*}
That is, although Fisher's LDA does not require that the data is generated from normal distributions, we still use the normality assumption $y_{i,j}\sim N(\mu_i,\Sigma_{w})$ as a working assumption.

\subsection{Fisher's LDA for Functional Data}

There is a rich literature of extending Fisher's LDA for functional data. Most methods can be categorized into three categories: the \emph{pre-smoothing} approaches that apply smoothing/denoising filters to the data before performing LDA, the \emph{regularized} approaches that generalize the idea of ridge LDA, and the \emph{basis-representation} approaches that project the data to certain functional bases before performing Fisher's LDA. 

The idea for \emph{pre-smoothing} approaches is quite intuitive: since the observations are from some smooth functions, one can apply some smooth filter (e.g. Gaussian smoothing filter) to each observation, and perform LDA to the smoothed data. However, applying smoothing filters to each observations require lots of computation efforts, since the tuning parameters for smoothing filters need to be selected for each smoothing. Moreover, \cite{CarrollDH13} showed that the tuning parameters should be selected to under-smooth each observations for the pre-smoothing approaches to achieve reasonble results (compared to the other classifiers for functional data). However, choosing such tuning parameters that satisfy the requirements in \cite{CarrollDH13} is non-trivial in practice. Besides, the dimension for the smoothed data is still very high, and hence the \emph{HDLSS} problem as mentioned in Section \ref{intro} still remains

The \emph{regularized} approach was introduced by \cite{pda}, and they named it as Penalized Discriminant Analysis (PDA). Their idea is also naive: when the sample size $n$ is larger than the dimension $p$, the estimated covariance matrix $\hat{\Sigma}_W$ is usually stable and nonsingular. However, when $p>n$, $\hat{\Sigma}_W$ becomes singular (since the rank of $\hat{\Sigma}_W$ is $\min\{n,p\}$) and noisy. Hence it is natural to regularize $\hat{\Sigma}_W$ with some full rank matrix, such as the identity matrix $I$ used in \cite{rlda}, to stabilize the outcomes of Fisher's LDA. The \emph{regularized} Fisher's discriminant directions become the solutions of:
\begin{equation}
\hat{\beta} \equiv \arg\max_{\beta}\frac{\beta^{T}\hat{\Sigma}_{b}\beta}{\beta^{T}\left(\hat{\Sigma}_{w}+\alpha\Omega\right)\beta},
\label{PDAdir}
\end{equation}
where $\Omega$ is a regularization matrix and $\alpha\in[0,\infty)$ is the tuning parameter. For functional data, the regularization matrix $\Omega$ is usually derived from finite-difference matrices (discretized version of some differential operator). For example, Hastie et al. used $\Omega=D_2^TD_2$ for 1-dimensional functions such as audio signals, where $D_2$ is the second-order finite-difference matrix
\[
D_2=
\left[\begin{array}{cccccc}
1 & -2 & 1 & 0 & \cdots & 0 \\
0 & 1 & -2 & 1 & \ddots  & \vdots \\
\vdots & \ddots & \ddots & \ddots & \ddots & 0\\
0 & \cdots & 0 & 1 & -2 & 1
\end{array}
\right]_{(n-2)\times n},
\]
and use $\Omega=\Delta$ where $\Delta$ is the descritzed Laplacian operator for 2-dimensional functions such as images \cite{pda}. 

The regularized approach is simple and can be easily extended to incorporate with nonlinear manifolds. For example, Cai et al. introduced PDA to the computer vision society and incorporate it with nonlinear manifolds such as ISOMAP, locally linear embedding, etc. \cite{LSSS}. However, regularized approaches usually do not perform as well as basis-representation approaches due to their careless modelings. In section \ref{gplda} we will show that regularized approaches such as PDA are inappropriate special cases of our Bayesian Gaussian process framework; in sections \ref{simulation} and \ref{realapp} we will show that regularized approaches can outperform filtered approaches with careful modeling. Moreover, the performance of PDA is sensitive to the tuning parameter $\alpha$. Usually the tuning parameter $\alpha$ is selected by cross-validation. However, cross-validation requires lots of computation efforts, especially when the dataset is large.

The \emph{basis-representation} approach assumes that the smooth functions which generate our observed (discretized) data can be represented by $q$ basis functions, with usually $q\ll n$. If this assumption holds, we can project our data to the space spanned by the $q$ basis functions, and discriminate the data by their coefficients (scores) of the basis functions. The basis functions can be either predefined such as B-spline, Fourier, wavelets, etc., or data-driven such as principal components analysis (PCA), partial least squares (PLS), etc. For example, \cite{James01} suggested a B-spline approach for functional LDA; \cite{berlinet2008functional} introduced a wavelets approach for classifying functional data. However, Basis representations may introduce additional problems as well. For example, \cite{James01} showed in their article that their approach may be confounding and leads to local optimum. Furthermore, the predefined basis functions may not represent the observed data well, and choosing ``good'' basis functions to represent the observed data still remains a challenge. 

On the other hand, data-driven basis functions are derived by the observed data and usually provide better representations to the data we observed. Hence, data-driven basis representations are more preferable in recent literatures. For example, \cite{fpca_logistic} introduced a functional PCA approach for classifying 1-dimensional curves; \cite{PredaCS07} presented a functional PLS approach for functional LDA; \cite{fpls_hall} illustrated an example that PLS is more preferable than the other basis representations for discrimination tasks: if the differences between group means cannot be represented by the selected basis functions, discriminating by the basis coefficients will be no better than merely random guess. They showed that PLS will never find such basis functions that cannot represent the differences of group means, while PCA may find such basis functions when the total covariance is dominated by the within covariance. Later in section \ref{simulation} we will further discuss this situation by simulation examples.

The above ideas for Fisher's LDA can be also extended to the other dimension reduction techniques for functional data. For example, \cite{PredaCS07} and \cite{fpls_hall} utilize the \emph{pre-smoothing} idea for their functional PLS; \cite{ppca} and \cite{ppls} utilize the \emph{regularized} approach for functional PCA and PLS; \cite{Aguilera96} utilized the \emph{basis-representation} approach for functional PCA. Since we focus on Fisher's LDA in this article, we will not provide a detailed survey for the other dimension reduction techniques for functional data.

\subsection{A Short Introduction to Bayesian Smoothing}\label{bsmooth}

Bayesian smoothing is a Bayesian perspective for smoothing and denoising. Let's illustrate it with the following toy example: assume that we have observed the pairs $(x_i,y_i)$ that come from
\begin{equation}
	y_i=m(x_i)+\epsilon_i,\hspace{56pt} i=1,\cdots,n,
	\label{smoothing}
\end{equation}
where $m$ is an unknown \emph{smooth} function and $\epsilon_i$ are random noises with mean $0$. The key idea of smoothing/denoising techniques is to find a balance between the distance of $y_i$ and $m(x_i)$ with the \emph{smoothness} of $m(\cdot)$. In general, this idea can be described as the following optimization problem
\begin{equation}
	\hat{m}=\argmin\sum_{i=1}^{n}\|y_i-m(x_i)\|+\alpha \mathcal{J}(m),
	\label{smoothingopt}
\end{equation}
where $\mathcal{J}(m)$ denotes some measurement of the smoothness for function $m$ and $\alpha$ is a tuning parameter that controls the ``degree of smoothness'' of $m$. Typical choices for the distance between $y_i$ with $m(x_i)$ are $\ell_1$ and $\ell_2$ norms, and $\mathcal{J}(m)$ is usually described by the $\ell_1$ and $\ell_2$ norms of $m'(\cdot)$ or $m''(\cdot)$. As $m(\cdot)$ is an unknown function, the norms of $m'$ or $m''$ by the Riemman sum of the finite difference of $m$. For example, the $\ell_1$ norm of $m'(\cdot)$ is estimated by
\begin{equation*}
	\|m'(x)\|_1=\int |m'(x)|dx\approx\sum_{i=1}^{n-1}|m(x_{i+1}-m(x_{i})|=\|D\bm\|_1,
%\label{tv}
\end{equation*}
where $\bm=(m(x_1),\cdots,m(x_n))^{T}$ and 
\[
D=
\left[
	\begin{array}{ccccc}-1 & 1 & 0 & \cdots & 0 \\
						0 & -1 & 1 & \ddots & \vdots \\
						\vdots & \ddots & \ddots & \ddots & 0 \\
						0 & \cdots & 0 & -1 & 1 \\
	\end{array}
\right]_{(n-1)\times n}
\]
is the first-order finite difference operator. When both norms are taken to be $\ell_2$ norms, (\ref{smoothingopt}) becomes 
\begin{equation}
\begin{split}
	\argmin&\sum_{i=1}^{n}\left(y_i-m(x_i)\right)^2+\alpha\bm^{T}D^{T}D\bm \\
	=&\sum_{i=1}^{n}\left(y_i-m(x_i)\right)^2+\alpha\bm^{T}\Omega\bm,
\end{split}
\label{smoothingspline}
\end{equation}
which becomes the famous smoother \emph{linear smoothing spline} \cite{dssp}; when both norms are taken to be $\ell_1$ norms, (\ref{smoothingopt}) becomes another famous denoising technique \emph{L1-TV} \cite{l1tv}.

The idea of Bayesian smoothing is to describe the smoothing/denoising problems as probabilistic models and replace the optimization problem (\ref{smoothing}) with a \emph{maximum a posteriori} (MAP) estimation. This is usually accomplished by assuming that $\epsilon_i$ follow a certain probability distribution (e.g. i.i.d Normal distribution) and assigning a prior distribution to characterize ``the smoothness of $m$''. For example, if we assume that $\epsilon_i\stackrel{iid}{\sim}N(0,\sigma_{\epsilon}^{2})$ and use the prior probability $\bm\sim N(0,\sigma_{m}^{2}\Omega)$ with $\Omega=D^{T}D$ for $\bm$, then the log-posterior of (\ref{smoothing}) becomes
\begin{equation}
	-\frac{1}{2\sigma_{\epsilon}^2}\sum_{i=1}^{n}\left(y_i-m(x_i)\right)^2-\frac{1}{2\sigma_m^2}\bm^T\Omega\bm
\label{bayesmoothingspline}
\end{equation}
up to some constants irrelevant to the parameters of major interest $\bm$. Maximizing the log-posterior distribution (\ref{bayesmoothingspline}) is equivalent to minimizing
\[
	\sum_{i=1}^{n}\left(y_i-m(x_i)\right)^2+\alpha\bm^T\Omega\bm
\]
with $\alpha=\sigma_{\epsilon}^2/\sigma_m^2$, which yields to the same linear smoothing spline smoother as in (\ref{smoothingspline}). Similarly, if we assume that $\epsilon_i\stackrel{iid}{\sim}Laplae(0,\sigma_{\epsilon})$ and the prior probability $\bm\sim Laplace(0,\sigma_{m}\Omega)$ for $\bm$, we will derive a MAP estimation that is equivalent to L1-TV denoising. For more introduction to Bayesian smoothing/denoising techniques, we refer to the following references: \cite{Gribonval11,BayesTV}. For more applications and technical details about Bayesian smoothing, we suggest the comprehensive textbook by \cite{bayesmooth}.

\subsection{Bayesian Fisher's LDA}

\begin{table}[htdp]
	\centering
		%\begin{tabular}{|lll|}
		\begin{tabular}{|lp{0.5\linewidth}l|}
		  \hline
		  Notation & Meaning & Analytic Expression\\
		  \hline
			$c$ & Number of classes & \\
			$n_i$ & Sample size of $i$-th class & \\
			$n$ & Total sample size & \\
			$p$ & Feature space dimension (resolution for image) & \\
			$\mu_{i}$ & Mean function of $i$-th class & \\
			$z_{i,j}$ & $j$-th sample function of $i$-th class & \\
			$x_{i,j}$ & A $p$-dimensional vector which is sampling from $z_{i,j}$ & \\
			$y_{i,j,k}$ & Noisy observation corresponding to $k$-th component of $x_{i,j}$ & \\
			$\Sigma_{w}$ & Within-class covariance matrix & \\
			$\Sigma_{b}$ & Between-class covariance matrix & $\sum_{i=1}^{c}{\left(\mu_{i}-\bar{\mu}\right)\left(\mu_{i}-\bar{\mu}\right)^{T}}$\\
&&\\
			$D$ & $1$-st order difference matrix & $\left[
	\begin{array}{ccccc}-1 & 1 & 0 & \cdots & 0 \\
						0 & -1 & 1 & \ddots & \vdots \\
						\vdots & \ddots & \ddots & \ddots & 0 \\
						0 & \cdots & 0 & -1 & 1 \\
	\end{array}
\right]_{(n-1)\times n}$\\
&&\\
			$D_2$ & $2$-nd order difference matrix & $\left[\begin{array}{cccccc}
1 & -2 & 1 & 0 & \cdots & 0 \\
0 & 1 & -2 & 1 & \ddots  & \vdots \\
\vdots & \ddots & \ddots & \ddots & \ddots & 0\\
0 & \cdots & 0 & 1 & -2 & 1
\end{array}
\right]_{(n-2)\times n}$\\
&&\\
			$\Omega$ & Smoothing matrix & e.g. $D^{T}D$, $D_2^{T}D_2$\\
			\hline
		\end{tabular}
	\caption{Reference for Our Notations}
	\label{notation}
\end{table}

The phrase \emph{Bayesian Fisher's LDA} has also been used in some literatures \cite{BayesLSSVM,BayesKDA}. However, these articles focus on Bayesian interpretations for Kernel Discriminant Analysis (KDA), i.e. applying kernel tricks to Fisher's LDA. They assume that the observations follow Gaussian process after kernel transformations, and apply some priors (e.g. Gaussian prior) to the KDA coefficients. However, as we will show in our empirical studies (Sections \ref{simulation} and \ref{realapp}), KDA does not perform as well as the other approaches for functional data. To our knowledge, our work is the \emph{first} article that discusses Bayesian formulations of Fisher's LDA for functional data.

\section{The Gaussian Process Model of Fisher's Discriminant}\label{gplda}

In this section, we introduce our Bayesian modeling strategy of Fisher's LDA for functional data. We assume that the the observed data are digitized from some underlying smooth random functions with some i.i.d Gaussian white noises, and assume that the underlying functions follows Gaussian processes with unknown mean functions and a common covariance function as parameters. Then we assign certain prior probabilities inspired by the idea of Bayesian smoothing to the unknown mean functions, and a Wishart prior (also inspired by the idea of Bayesian smoothing) to the unknown common covariance function. We describe our Gaussian Process data generation model in Section \ref{gpmodel}, derive the priors from the idea of Bayesian smoothing in Section \ref{priors} as well as the corresponding maximum a posteriori (MAP) estimation in Section \ref{map}. The selection of tuning parameters is also addressed as an estimation problem and are estimated simultaneously with the mean and covariance functions in Sections \ref{priors} and \ref{map}. We suggest a back-fitting algorithm for the MAP estimation and prove the convergence of this algorithm in Section \ref{backfitting}. Finally, we show that penalized discriminant analysis (PDA) can be interpreted as an (improper) special case of our Bayesian framework in Section \ref{PDA}. Since this section contains most derivation, we make a reference table for our notations in Table \ref{notation} in the beginning of this section.

\subsection{Gaussian Process Model}\label{gpmodel}

For finite dimensional data, Fisher's LDA models each class on multivariate normal distribution with different means and the same within-class covariance. The Gaussian Process model is an natural extension for multivariate normal distribution when the data are observed from continuous functions. We assume all our data are generated from GP model with different mean functions and the same within-class covariance function. More specifically, the $j$-th function $z_{i,j}(\cdot)$ from $i$-th class follows Gaussian processes $GP\left(\mu_i(\cdot),\Sigma_w(\cdot,\cdot)\right)$ distribution with \emph{mean functions} $\mu_i(\cdot)$ and the common \emph{covariance function}s $\Sigma_w(\cdot,\cdot)$. Of course, we only have discrete observations after sampling. We denote the discrete observation of function $z_{i,j}(\cdot)$ by $x_{i,j}$ which is a $p$-dimensional vector sampled from $z_{i,j}(\cdot)$ with $x_{i,j}=(z_{i,j}(t_1),\ldots,z_{i,j}(t_p))$, where $\{t_k|k=1,\ldots,p\}$ are the sample positions.

Assume further the observation error is additive white Gaussian noise and the corresponding discrete observations are $y_{i,j}$. The $k$-th component, which is corresponding to the point $t_k$, of vector $x_{i,j}$ is denoted by $x_{i,j,k}$ and that of vector $y_{i,j}$ by $y_{i,j,k}$. By definition, for any Gaussian process $GP\left(\mu_i(\cdot),\Sigma_w(\cdot,\cdot)\right)$ and any index $\{t_k|k=1,\ldots,p\}$ of multivariate Gaussian distribution $N\left(\mu_i,\Sigma_w\right)$. Here we slightly abuse the notations for letting $\mu_i=(\mu(t_1),\ldots,\mu(t_p))^{T}$ and $\Sigma_w=\left(\Sigma_w(t_i,t_j)\right)_{p\times p}$.

Now the noisy discrete observations $\{y_{i,j,k}\}$ follow:
\begin{align*}
&y_{i,j} = x_{i,j}+\epsilon_{i,j}, \hspace{53pt} i=1,\ldots,c,j=1,\ldots,n_c,\\
&x_{i,j} \sim N(\mu_i,\Sigma_{w}), \hspace{48pt} i=1,\ldots,c,j=1,\ldots,n_c,\\
&\epsilon_{i,j} \sim N(0,\sigma^{2}I_{p}), \hspace{48pt} i=1,\ldots,c,j=1,\ldots,n_c,\\
\end{align*}
where $I_{p}$ is the $p\times p$ identity matrix.

\subsection{Priors for the Mean and Covariance Functions}\label{priors}

Following the most conventional practice of Bayesian statistics, we assign conjugate priors to the mean functions and the within-class covariance (e.g. \cite{wishart} employ a similar Bayesian hierarchical model for estimation.). Given feature dimension $p$, the priors for the $p$-dimensional mean vectors $\mu_i$ and the $p\times p$ dimensional within-covariance matrix $\Sigma_w$ follow normal and inverse-Wishart distribution, respectively, which are the corresponding conjugate priors:

\begin{equation}
\begin{split}
\pi_{m}(\mu_i)&= \frac{1}{2\pi^{\frac{k}{2}}}|\alpha_1\Omega|^{\frac{1}{2}} e^{-\frac{1}{2}\alpha_1\mu_i^T\Omega\mu_i}, \\
\pi_{wc}(\Sigma_w) &= \frac{|\alpha_2\Omega|^{\frac{\nu}{2}}}{2^{\frac{\nu p}{2}} \Gamma_{p}(\frac{\nu}{2})}|\Sigma_w|^{-\frac{\nu+p+1}{2}}e^{-\frac{1}{2}\Tr(\alpha_2\Omega\Sigma_{w}^{-1})},
\end{split}
\label{gpprior}
\end{equation}
where $\Omega=D^{T}D$ and $D$ is the discretized version of the first-order differential operator $d$. Inspired by Bayesian smoothing technique introduced in Section \ref{bsmooth}, we specify the prior parameter $\Omega=D^{T}D$ for both normal and inverse-Wishart priors. 

Now the remaining unspecified parameters are $\nu$, $\alpha_1$, $\alpha_2$ and $\sigma^2$. We adapt the hierarchical Bayesian approach and follow the conventional estimation procedure for inverse-Wishart distribution which re-parametrize the degree of freedom as $\delta = \nu - (p-1)$ since $\nu > p-1$ (\cite{wishart}). For $\alpha_1$, $\alpha_2$ and $\frac{1}{\sigma^2}$, we further assign the conjugate (hyper-)priors to them. The conjugates prior is the distribution whose corresponding posterior distribution belongs to the same family which thus simplifies the form of posterior. In our case, all these parameters shall follow gamma distribution $\Gamma\left(a,b\right)=\frac{b^a}{\Gamma(a)}x^{a-1}e^{-bx}$. We represent our hierarchical Bayesian model in Figure \ref{network}.

%For the gamma distribution $\frac{b^a}{\Gamma(a)}x^{a-1}e^{-bx}$, it is easy to derive the hyperparamters $(a_1,b_1)$ for $\alpha_1$, $(a_2,b_2)$ for $\alpha_2$ and $(a_3,b_3)$ for $\sigma^2$. For hyperparameters $a_1$ and $b_1$ of the gamma prior of $\alpha_1$, the posterior hyperparameters become $a_1+\frac{c}{2}$ and $b_1+\frac{1}{2}\sum_{i=1}^c{\mu_i^T\Omega\mu_i}$. Similarly, for hyperparameters $a_2$ and $b_2$ of the gamma prior of $\alpha_2$, the posterior hyperparameters become $a_2+\frac{\nu}{2}$ and $b_2+\frac{1}{2}\Tr\left(\Omega\hat{\Sigma}_{w}^{-1}\right)$. The posterior hyperparameters corresponding to $a_3$ and $b_3$ of the gamma prior of $\frac{1}{\sigma^2}$ are $a_3+\frac{np}{2}$ and $b_3+\frac{1}{2}\sum_{i=1}^c{\sum_{j=1}^{n_i}{\left(y_{i,j}-\mu_i\right)^T\left(y_{i,j}-\mu_i\right)}}$.

\begin{figure}
	\centering
		\includegraphics[scale=0.6]{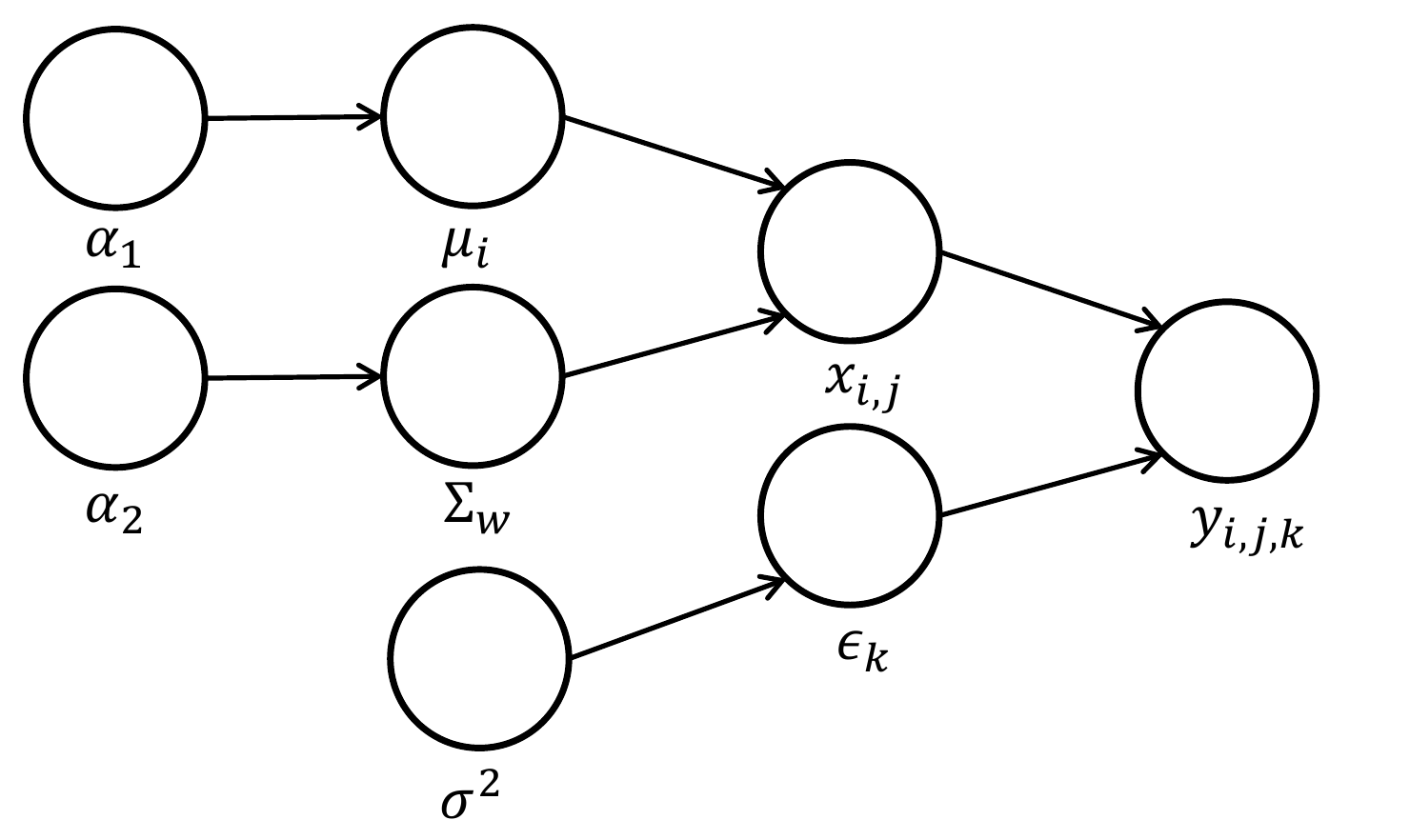}
	\caption{Graph representation of our Bayesian model}
		\label{network}
\end{figure}

\subsection{MAP estimation}\label{map}

We now have the log-posterior:
\begin{align*}
&l\left(\{x_{i,j}\},\{\mu_i\},\Sigma_w,\alpha_1,\alpha_2,\frac{1}{\sigma^2}\bigg|\{y_{i,j}\}\right)\\
=&\log p\left(\{y_{i,j}\}\bigg|\{x_{i,j}\},\frac{1}{\sigma^2}\right)+\log p\left(\{x_{i,j}\}|\{\mu_i\},\Sigma_w\right)+\log \pi_{m}\left(\{\mu_i\}|\alpha_1\right)+\log \pi_{wc}\left(\Sigma_w|\alpha_2\right)\\
&+\log \pi\left(\alpha_1\right)+\log \pi\left(\alpha_2\right)+\log \pi\left(\frac{1}{\sigma^2}\right)+C,
\end{align*}
which follows the graphical model in Figure \ref{network} and $C$ is a constant for normalization. After incorporating the Gaussian process assumption and the above priors, we have:
\begin{equation}
\begin{split}
&2l\left(\{x_{i,j}\},\{\mu_i\},\Sigma_w,\alpha_1,\alpha_2,\frac{1}{\sigma^2}\bigg|\{y_{i,j}\}\right)\\
=&-\sum_{i=1}^c\sum_{j=1}^{n_i}{\frac{\left(y_{i,j}-x_{i,j}\right)^{T}\left(y_{i,j}-x_{i,j}\right)}{\sigma^{2}}}+np\ln{\frac{1}{\sigma^{2}}}-\sum_{i=1}^c\sum_{j=1}^{n_i}{\left(x_{i,j}-\mu_{i}\right)^{T}\Sigma_{w}^{-1}\left(x_{i,j}-\mu_{i}\right)}-n\ln{\left|\Sigma_{w}\right|}\\
&-\alpha_1\sum_{i=1}^c\mu_i^T\Omega\mu_i+c\ln\alpha_1-\alpha_2\Tr\left(\Omega\Sigma_{w}^{-1}\right)+p\ln\alpha_2-\left(\nu+p+1\right)\ln{\left|\Sigma_{w}\right|}\\
&-2b_1\alpha_1+2(a_1-1)\ln\alpha_1-2b_2\alpha_2+2(a_2-1)\ln\alpha_2-2b_3\frac{1}{\sigma^{2}}+2(a_3-1)\ln{\frac{1}{\sigma^{2}}}+C.
\end{split}
\label{gppost}
\end{equation}

Given the log-posterior stated above, the maximum a posteriori (MAP) estimation of (\ref{gppost}) must satisfy the following first-order condition:
\begin{align}
&\left\{\begin{array}{l} 
\displaystyle \frac{\partial 2l}{\partial \alpha_1}=-\sum_{i=1}^c\mu_i^T\Omega\mu_i+c\frac{1}{\alpha_1}-2b_1+2(a_1-1)\frac{1}{\alpha_1}=0,\\
\displaystyle \frac{\partial 2l}{\partial \alpha_2}=-\Tr\left(\Omega\Sigma_{w}^{-1}\right)+p\frac{1}{\alpha_2}-2b_2+2(a_2-1)\frac{1}{\alpha_2}=0,\\
\displaystyle \frac{\partial 2l}{\partial (1/\sigma^2)}=-\sum_{i=1}^c\sum_{j=1}^{n_i}{\left(y_{i,j}-x_{i,j}\right)^{T}\left(y_{i,j}-x_{i,j}\right)}+np\sigma^{2}-2b_3+2(a_3-1)\sigma^{2}=0,\\
\displaystyle \frac{\partial 2l}{\partial x_{i,j}}=-\frac{1}{\sigma^2}\left(x_{i,j}-y_{i,j}\right)-\Sigma_{w}^{-1}\left(x_{i,j}-\mu_{i}\right)=0, \hspace{70pt} \forall i=1,\ldots,c,\forall j=1,\ldots,n_c,\\
\displaystyle \frac{\partial 2l}{\partial \mu_i}=\sum_{j=1}^{n_{i}}{\Sigma_{w}^{-1}\left(\mu_{i}-x_{i,j}\right)}+\alpha_1\Omega\mu_{i}=0, \hspace{106pt} \forall i=1,\ldots,c,\\
\displaystyle \frac{\partial 2l}{\partial \Sigma_{w}}=n\left[\Sigma_{w}^{-1}\left(\frac{1}{n}\sum_{i=1}^{c}\sum_{j=1}^{n_{i}}{\left(x_{i,j}-\mu_{i}\right)\left(x_{i,j}-\mu_{i}\right)^{T}}\right)\Sigma_{w}^{-1}-\frac{1}{\rho}\Sigma_{w}^{-1}+\frac{1}{n}\alpha_2\Sigma_{w}^{-1}\Omega\Sigma_{w}^{-1}\right]=0.
\end{array}\right.
\label{M-system}
\end{align}
The last equation can be derived by the matrix calculus formulae: $\frac{\partial \Tr \left(AX^{-1}\right)}{\partial X}=X^{-1}A^{T}X^{-1}$, $\frac{\partial \ln \left|X\right|}{\partial X}=\left(X^{-1}\right)^T$, since 
\[
\sum_{i=1}^c\sum_{j=1}^{n_i}{\left(x_{i,j}-\mu_{i}\right)^{T}\Sigma_{w}^{-1}\left(x_{i,j}-\mu_{i}\right)}=\Tr\left(\sum_{i=1}^c\sum_{j=1}^{n_i}{\left(x_{i,j}-\mu_{i}\right)}\left(x_{i,j}-\mu_{i}\right)^{T}\Sigma_{w}^{-1}\right),
\]
and both $\Omega$ and $\Sigma_{w}$ are symmetric.

Now solving the above linear system (\ref{M-system}) yields:
\begin{align}
&\left\{\begin{array}{l} 
\displaystyle\alpha_1=\frac{2a_1+c-2}{2b_1+\sum_{i=1}^{c}{\mu_{i}^{T}\Omega\mu_{i}}},\\
\displaystyle\alpha_2=\frac{2a_2+p-2}{2b_2+\Tr\left(\Omega\Sigma_{w}^{-1}\right)},\\
\displaystyle\sigma^2=\frac{2b_3+\sum_{i=1}^c\sum_{j=1}^{n_i}{\left(y_{i,j}-x_{i,j}\right)^{T}\left(y_{i,j}-x_{i,j}\right)}}{a_3+np-2},\\
\displaystyle x_{i,j}=\left(\Sigma_{w}+\sigma^2I\right)^{-1}\left(\Sigma_{w}y_{i,j}+\sigma^2\mu_{i}\right), \hspace{80pt} \forall i=1,\ldots,c,\forall j=1,\ldots,n_c,\\
\displaystyle\mu_{i}=\left(I+\frac{\alpha_1}{n_{i}}\Sigma_{w}\Omega\right)^{-1}\bar{x}_{i}, \hspace{136pt} \forall i=1,\ldots,c,\\
\displaystyle\Sigma_{w}=\rho S_{xx}+\frac{\rho}{n}\alpha_2\Omega,
\end{array}\right.
\label{M-solution}
\end{align}
where $\bar{x}_{i}=\frac{1}{n_i}\sum_{j=1}^{n_{i}}{x_{i,j}}$, $S_{xx}=\sum_{i=1}^c\sum_{j=1}^{n_{i}}{\left(x_{i,j}-\mu_{i}\right)\left(x_{i,j}-\mu_{i}\right)^{T}}$, $\rho=\frac{n}{n+\nu+p+1}$.

\subsection{Backfitting Algorithm and Convergence Results}\label{backfitting}

From (\ref{M-system}) we notice that the estimation of $\mu_i$ requires the information about $\Sigma_{w}$, and vice versa. Thus iteration is required to solve the system (\ref{M-system}). We propose the following backfitting algorithm to estimate $\mu_i$ and $\Sigma_{w}$:
\nopagebreak
\begin{algorithm}[H]
\caption{Backfitting Algortihm}
\label{algbf}
\begin{algorithmic}[1]
\STATE Initial $\hat{\mu}_i$ by $\bar{y}_i$ and $\hat{\Sigma}_{w}$ by .
\REPEAT 
\STATE $\hat{\alpha}_1=\frac{2a_1+c-2}{2b_1+l\sum_{i=1}^{c}{\hat{\mu}_{i}^{T}\Omega\hat{\mu}_{i}}}.$
\STATE $\hat{\alpha}_2=\frac{2a_2+p-2}{2b_2+\Tr\left(\Omega\hat{\Sigma}_{w}^{-1}\right)}$
\STATE $\hat{\sigma}^2=\frac{2b_3+\sum_{i=1}^c\sum_{j=1}^{n_i}{\left(y_{i,j}-\hat{x}_{i,j}\right)^{T}\left(y_{i,j}-\hat{x}_{i,j}\right)}}{a_3+np-2}$
\STATE $\hat{x}_{i,j}=\left(\hat{\Sigma}_{w}+\hat{\sigma}^2I\right)^{-1}\left(\hat{\Sigma}_{w}y_{i,j}+\hat{\sigma}^2\hat{\mu}_{i}\right), \hspace{80pt} \forall i=1,\ldots,c$
\STATE $\hat{\mu}_{i}=\left(I+\hat{\Sigma}_{w}\frac{\hat{\alpha}_1}{n_{i}}\Omega\right)^{-1}\bar{\hat{x}}_{i}$
\STATE $\hat{\Sigma}_{w}=\rho\left(\sum_{i=1}^c\sum_{j=1}^{n_i}{\frac{1}{n}\left(\hat{x}_{i,j}-\hat{\mu}_{i}\right)\left(\hat{x}_{i,j}-\hat{\mu}_{i}\right)^{T}}\right)+\frac{\rho}{n}\hat{\alpha}_2\Omega$
\UNTIL converges
\end{algorithmic}
\end{algorithm}

The important question for an iteration algorithm is whether it converges in finite steps or not. We now show that the backfitting algorithm converges almost surely when $\{n_i|i=1,\ldots,c\}$ are sufficiently large under the following conditions:

\noindent {\bf Condition ${\cal C}$}:
\begin{enumerate}
\item[(${\cal C}$1)] $a_1>\frac{1}{2}M-\frac{1}{2}c+1$.
\item[(${\cal C}$2)] $a_2>\frac{1}{2}\Tr\Omega-\frac{1}{2}p+1$.
\end{enumerate}
where $M=2M'_{0}Qcp\max_{i,k}{\left\{\|\Omega\hat{\mu}_{i,(k)}\|_{2},\|\Omega\hat{\mu}_{i}\|_{2}\right\}}$, $M'_{0}=\max_{k}\{|\hat{\alpha}_{1,(k)}|,|\hat{\alpha}_{1}|\}^2$, $Q=\frac{M_2}{2(1-M_1M'_2)}$ which will be specified below and $\|\cdot\|_{2}$ is the operator norm of matrix: $\|A\|_{2}=\sup_{x\neq0}\frac{\|Ax\|_{2}}{\|x\|_{2}}$.

\begin{theorem}
Let $\hat{\mu}_{i,(k)}$, $\hat{\Sigma}_{w,(k)}$, $\hat{x}_{i,j,(k)}$, $\hat{\alpha}_{1,(k)}$, $\hat{\alpha}_{2,(k)}$ and $\hat{\sigma}_{(k)}^2$ be the estimate of $\mu_{i}$, $\Sigma_{w}$, $x_{i,j}$, $\alpha_1$, $\alpha_2$ and $\sigma^2$ at $k$th iteration in algorithm \ref{algbf}, $\hat{\mu}_{i}$, $\hat{\Sigma}_{w}$, $\hat{x}_{i,j}$, $\hat{\alpha}_{1}$, $\hat{\alpha}_{2}$ and $\hat{\sigma}^2$ be the MAP estimate of $\mu_{i}$, $\Sigma_{w}$, $x_{i,j}$, $\alpha_1$, $\alpha_2$ and $\sigma^2$ satisfying Eq. (\ref{M-system}). Suppose $a_1,a_2$ further satisfy Condition ${\cal C}$ stated above. We have:
\[
\hat{\mu}_{i,(k)}\rightarrow\hat{\mu}_{i},\hat{\Sigma}_{w,(k)}\rightarrow\hat{\Sigma}_{w}, \hat{x}_{i,j,(k)}\rightarrow\hat{x}_{i,j},\hat{\alpha}_{1,(k)}\rightarrow\hat{\alpha}_{1}, \hat{\alpha}_{2,(k)}\rightarrow\hat{\alpha}_{2}, \hat{\sigma}_{(k)}^2\rightarrow\hat{\sigma}^2, \tag*{a.s.}
\]
for $\{n_i|i=1,\ldots,c\}$ are sufficiently large.
\end{theorem}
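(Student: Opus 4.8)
The plan is to recognize Algorithm~\ref{algbf} as a fixed-point iteration $\theta_{(k+1)}=T(\theta_{(k)})$ for the stacked parameter vector $\theta=(\{\mu_i\},\Sigma_w,\{x_{i,j}\},\alpha_1,\alpha_2,\sigma^2)$, where each coordinate map of $T$ is exactly the corresponding right-hand side of (\ref{M-solution}), and to prove convergence via the Banach contraction mapping principle. Since the fixed points of $T$ coincide with the solutions of the first-order system (\ref{M-system}), i.e.\ with the MAP estimate, a single contraction argument would simultaneously yield existence, uniqueness, and geometric convergence of all six sequences to their limits.

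First I would set up the working domain. I would restrict $\theta$ to a closed product region $\mathcal{K}$ on which $\Sigma_w$ has eigenvalues in a fixed interval $[\lambda_{\min},\lambda_{\max}]$ with $\lambda_{\min}>0$, on which $\sigma^2$ is bounded away from $0$ and $\infty$, and on which $\alpha_1,\alpha_2,x_{i,j},\mu_i$ are bounded. The probabilistic input enters precisely here: by the strong law of large numbers the sample cross-product $\frac{1}{n_i}\sum_j(x_{i,j}-\mu_i)(x_{i,j}-\mu_i)^T$ concentrates around $\Sigma_w$, so the $\Sigma_w$-update in (\ref{M-solution}), being the sum of this positive-semidefinite sample covariance and the positive-semidefinite penalty $\frac{\rho}{n}\alpha_2\Omega$, keeps $\Sigma_w$ uniformly positive definite and bounded; this is what makes the statement hold \emph{almost surely} and only once $\{n_i\}$ are \emph{sufficiently large}. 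I would then verify $T(\mathcal{K})\subseteq\mathcal{K}$, so the iteration is well defined and the resolvents $(\Sigma_w+\sigma^2 I)^{-1}$ and $(I+\frac{\alpha_1}{n_i}\Sigma_w\Omega)^{-1}$ appearing in (\ref{M-solution}) are uniformly bounded in operator norm.

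Next I would bound the Lipschitz constant of each coordinate map on $\mathcal{K}$ using standard matrix-perturbation (resolvent) identities. The crucial observation is that the terms coupling $\mu_i$ to $\Sigma_w$ and $\Sigma_w$ to the tuning parameters carry explicit factors $\frac{\alpha_1}{n_i}$ and $\frac{\rho}{n}\alpha_2$, which shrink like $O(1/n_i)$; hence for large $\{n_i\}$ these off-diagonal couplings contribute small Lipschitz constants. The scalar updates for $\alpha_1,\alpha_2,\sigma^2$ are smooth rational functions of the remaining variables, and Condition~$\mathcal{C}$ --- equivalently $2a_1+c-2>M$ and $2a_2+p-2>\Tr\Omega$ --- guarantees their numerators, which are exactly the numerators appearing in (\ref{M-solution}), stay positive and that their derivatives stay below the thresholds encoded in $M$, $M'_0$ and $Q$. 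Assembling these bounds, the feedback among the $x_{i,j}$-, $\mu_i$- and $\Sigma_w$-updates produces the geometric factor $Q=\frac{M_2}{2(1-M_1M'_2)}$, finite precisely when $M_1M'_2<1$, and combining this with the $O(1/n_i)$ smallness I would show the composite map has contraction factor strictly below $1$.

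The main obstacle is this last assembly step: controlling the \emph{coupled} block structure of $T$ so that the product of the individual Lipschitz constants is provably below $1$. Unlike a single smoothing update, here six interdependent maps feed into one another, and naive triangle-inequality bounds can exceed $1$; the delicate part is resolving the internal $x$--$\mu$--$\Sigma_w$ loop (which is where the factor $Q$ and the requirement $M_1M'_2<1$ originate) before folding in the tuning-parameter updates, and doing so with constants that are uniform over $\mathcal{K}$ and that genuinely shrink as $\{n_i\}$ grow. Once the composite contraction constant is shown to be below $1$ on the invariant region $\mathcal{K}$, the Banach fixed-point theorem delivers a unique fixed point together with geometric convergence $\theta_{(k)}\to\hat\theta$; since $\hat\theta$ solves (\ref{M-system}) it is the MAP estimate, giving all six stated limits, and the almost-sure qualifier together with the large-$\{n_i\}$ hypothesis is inherited from the concentration step that secured $\mathcal{K}$.
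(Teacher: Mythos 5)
Your proposal is correct in substance, and its core machinery coincides with the paper's: the paper's proof is itself a contraction argument built from exactly the ingredients you list --- Condition ${\cal C}$ used to Lipschitz-bound the $\alpha_1,\alpha_2$ updates by the $\mu$- and $\Sigma_w$-errors, matrix-inversion (resolvent) perturbation bounds giving $M_1<1$ and $M'_2<1$ for the $\mu$- and $x$-updates, the strong law of large numbers plus MLE consistency supplying the almost-sure and large-$n_i$ qualifiers, and the $O(1/n)$ coupling factors making the cross terms negligible. Where you differ is the packaging. The paper never constructs an invariant region nor invokes the Banach fixed-point theorem: it takes the MAP solution of (\ref{M-system}) as given and directly derives a linear recursion for the errors $\sum_i\|\hat{\mu}_{i,(k)}-\hat{\mu}_{i}\|_2$ and $\|\hat{\Sigma}_{w,(k)}-\hat{\Sigma}_{w}\|_2$, whose leading coefficient $M_1M'_2+\frac{M_2}{Q}$ plus arbitrarily small terms is claimed below $1$; convergence of the remaining four sequences is then read off from the update formulas. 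Your route is strictly more ambitious: proving $T(\mathcal{K})\subseteq\mathcal{K}$ and a uniform contraction modulus between \emph{arbitrary pairs} of points would additionally deliver existence and uniqueness of the MAP fixed point, which the paper simply assumes, but at the cost of the ``assembly step'' you flag --- the paper sidesteps that difficulty by only ever measuring distances to the fixed point along the iteration path, which is a weaker property than a genuine contraction on a product region. One caveat you inherit verbatim from Condition ${\cal C}$: with $Q=\frac{M_2}{2(1-M_1M'_2)}$ one gets $\frac{M_2}{Q}=2(1-M_1M'_2)$, hence $M_1M'_2+\frac{M_2}{Q}=2-M_1M'_2>1$; for the contraction conclusion to go through (in either your version or the paper's) the constant must instead read $Q=\frac{2M_2}{1-M_1M'_2}$, giving $M_1M'_2+\frac{M_2}{Q}=\frac{1}{2}\left(1+M_1M'_2\right)<1$, so you should not take the stated $Q$ at face value when you carry out your Lipschitz assembly.
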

\begin{proof}
We first derive the bound of $\hat{\alpha}_{1,(k)}$ and $\hat{\alpha}_{2,(k)}$ from the Condition ${\cal C}$.
\begin{align*}
&a_1>\frac{1}{2}M-\frac{1}{2}c+1\\
\Rightarrow & \frac{1}{M'_{0}Qc}\sum_i{\|\hat{\mu}_{i,(k-1)}-\hat{\mu}_{i}\|_{2}}>\left|\sum_i{\frac{\left(\hat{\mu}_{i,(k-1)}-\hat{\mu}_{i}\right)\Omega\left(\hat{\mu}_{i,(k-1)}+\hat{\mu}_{i}\right)}{2a_1+c-2}}\right|\\
&\hspace{100pt}=\left|\frac{\sum_i{\hat{\mu}_{i,(k-1)}\Omega\hat{\mu}_{i,(k-1)}}-\sum_i{\hat{\mu}_{i}\Omega\hat{\mu}_{i}}}{2a_1+c-2}\right|\geq\left|\frac{1}{\hat{\alpha}_{1,(k)}}-\frac{1}{\hat{\alpha}_{1}}\right|\\
\Rightarrow &\left|\hat{\alpha}_{1,(k)}-\hat{\alpha}_{1}\right|<\frac{1}{Qc}\sum_i{\|\hat{\mu}_{i,(k-1)}-\hat{\mu}_{i}\|_{2}},\\
&a_2>\frac{1}{2}\Tr\Omega-\frac{1}{2}p+1\\
\Rightarrow &\left|\Tr(\hat{\Sigma}_{w,(k-1)}^{-1}-\hat{\Sigma}_{w}^{-1})\right|>\frac{\Tr\Omega}{2a_2+p-2}\left|\Tr(\hat{\Sigma}_{w,(k-1)}^{-1}-\hat{\Sigma}_{w}^{-1})\right|\\
&\hspace{100pt}\geq\left|\frac{\Tr(\Omega\hat{\Sigma}_{w,(k-1)}^{-1})-\Tr(\Omega\hat{\Sigma}_{w}^{-1})}{2a_2+p-2}\right|=\left|\frac{1}{\hat{\alpha}_{2,(k)}}-\frac{1}{\hat{\alpha}_{2}}\right|\\
\Rightarrow &\left|\hat{\alpha}_{2,(k)}-\hat{\alpha}_{2}\right|<M_0\left\|\hat{\Sigma}_{w,(k-1)}-\hat{\Sigma}_{w}\right\|_{2},
\end{align*}
for some constant $M_0$, since $\left\|\hat{\Sigma}_{w,(k-1)}^{-1}\right\|_{2}\left\|\hat{\Sigma}_{w,(k-1)}-\hat{\Sigma}_{w}\right\|_{2}\left\|\hat{\Sigma}_{w}^{-1}\right\|_{2}\geq\left\|\hat{\Sigma}_{w,(k-1)}^{-1}(\hat{\Sigma}_{w,(k-1)}-\hat{\Sigma}_{w})\hat{\Sigma}_{w}^{-1}\right\|_{2}\geq\frac{1}{p}\left|\Tr(\hat{\Sigma}_{w,(k-1)}^{-1}-\hat{\Sigma}_{w}^{-1})\right|$.

Then for $\hat{\mu}_{i,(k)}$, we have:
\begin{align*}
\hat{\mu}_{i,(k)}&=\left(I+\hat{\Sigma}_{w,(k-1)}\frac{\hat{\alpha}_{1,(k)}}{n_{i}}\Omega\right)^{-1}\bar{\hat{x}}_{i,(k)}\\
&=\left(I+\hat{\Sigma}_{w}\frac{\hat{\alpha}_{1}}{n_{i}}\Omega+\left(\hat{\alpha}_{1}\hat{\Sigma}_{w}-\hat{\alpha}_{1,(k)}\hat{\Sigma}_{w,(k-1)}\right)\frac{1}{n_i}\Omega\right)^{-1}\bar{\hat{x}}_{i,(k)}\\
&=\hat{\mu}_{i}+\left(I+\hat{\Sigma}_{w}\frac{\hat{\alpha}_{1}}{n_{i}}\Omega\right)^{-1}\left(\bar{\hat{x}}_{i,(k)}-\bar{\hat{x}}_{i}\right)+A^{-1}\left(\hat{\alpha}_{1}\hat{\Sigma}_{w}-\hat{\alpha}_{1,(k)}\hat{\Sigma}_{w,(k-1)}\right)B\frac{1}{n_i}\Omega A^{-1}\bar{\hat{x}}_{i,(k)},
\end{align*}
where $A=I+\hat{\Sigma}_{w}\frac{\hat{\alpha}_1}{n_{i}}\Omega$ and $B=\left(I+\frac{1}{n_i}\Omega A^{-1}\left(\hat{\alpha}_{1}\hat{\Sigma}_{w}-\hat{\alpha}_{1,(k-1)}\hat{\Sigma}_{w,(k-1)}\right)\right)^{-1}$. The last equation is derived from the matrix inversion formula. Since all matrices are bounded in the last equation, $\|\hat{\mu}_{i,(k)}-\hat{\mu}_{i}\|_{2} \leq M_1\|\bar{\hat{x}}_{i,(k)}-\bar{\hat{x}}_{i}\|_{2}+M_2|\hat{\alpha}_{1,(k)}-\hat{\alpha}_{1}|+M_3\left\|\hat{\Sigma}_{w,(k-1)}-\hat{\Sigma}_{w}\right\|_{2}$ for some constants $M_1$, $M_2$ and $M_3$. Here $M_1<1$ since the smallest eigenvalue of $\Omega^{t}\frac{\hat{\alpha}_{1}}{n_{i}}\hat{\Sigma}_{w}^{t}+\hat{\Sigma}_{w}\frac{\hat{\alpha}_{1}}{n_{i}}\Omega$ is larger than $0$.

Since
\begin{align*}
&\hat{x}_{i,j,(k)}=\left(\hat{\Sigma}_{w,(k-1)}+\hat{\sigma}_{(k-1)}^2I\right)^{-1}\left(\hat{\Sigma}_{w,(k-1)}y_{i,j}+\hat{\sigma}_{(k-1)}^2\hat{\mu}_{i,(k-1)}\right)\\
\Rightarrow &\bar{\hat{x}}_{i,(k)}=\left(\hat{\Sigma}_{w,(k-1)}+\hat{\sigma}_{(k-1)}^2I\right)^{-1}\left(\hat{\Sigma}_{w,(k-1)}\bar{y}_{i}+\hat{\sigma}_{(k-1)}^2\hat{\mu}_{i,(k-1)}\right).
\end{align*}
By Strong Law of Large Number and MLE consistency, we have $\|\sum_{j=1}^{n_i}{\frac{1}{n_{i}}\left(y_{i,j}-\hat{\mu}_{i}\right)}\|_{2}<\epsilon_1$ for all $i$ when $\{n_i|i=1,\ldots,c\}$ are sufficiently large. Now $\bar{\hat{x}}_{i,(k)}$ can be bounded by
\begin{align*}
\|\bar{\hat{x}}_{i,(k)}-\bar{\hat{x}}_{i}\|_{2} &\leq M'_1\epsilon_1+\|\left(\hat{\Sigma}_{w,(k-1)}+\hat{\sigma}_{(k-1)}^2I\right)^{-1}\hat{\sigma}_{(k-1)}^2\left(\hat{\mu}_{i,(k-1)}-\hat{\mu}_{i}\right)\|_{2}\\
&\leq M'_1\epsilon\|\left(\hat{\Sigma}_{w,(k-1)}-\hat{\Sigma}_{w}\right)\|_{2} + M'_2\|\left(\hat{\mu}_{i,(k-1)}-\hat{\mu}_{i}\right)\|_{2}
\end{align*}
for some constants $M'_1 = \left\|\left(\hat{\Sigma}_{w,(k-1)}+\hat{\sigma}_{(k-1)}^2I\right)^{-1}\right\|_{2}$ and $M'_2=\left\|\left(\hat{\Sigma}_{w,(k-1)}+\hat{\sigma}_{(k-1)}^2I\right)^{-1}\hat{\sigma}_{(k-1)}^2I\right\|_2<1$.

Similarly, Since
\begin{align*}
\frac{1}{\rho}\left(\hat{\Sigma}_{w,(k)}-\hat{\Sigma}_{w}\right)=\frac{1}{n}\sum_{i=1}^c\sum_{j=1}^{n_i}&{\left(x_{i,j}-\hat{\mu}_{i}\right)\left(\hat{\mu}_{i}-\hat{\mu}_{i,(k)}\right)^{T}+\left(\hat{\mu}_{i}-\hat{\mu}_{i,(k)}\right)\left(x_{i,j}-\hat{\mu}_{i}\right)^{T}}\\
&+\frac{1}{n}\sum_{i=1}^c{n_i\left(\hat{\mu}_{i}-\hat{\mu}_{i,(k)}\right)\left(\hat{\mu}_{i}-\hat{\mu}_{i,(k)}\right)^{T}}+\frac{1}{n}\left(\hat{\alpha}_{2,(k)}-\hat{\alpha}_{2}\right)\Omega.
\end{align*}
where $\rho=\frac{n}{n+\nu+p+1}$. For any $\epsilon_2$, by Strong Law of Large Number and MLE consistency again, $\|\sum_{j=1}^{n_i}{\frac{1}{n_{i}}\left(x_{i,j}-\hat{\mu}_{i}\right)}\|_{2}<\epsilon_2$ for all $i$ in probability $1$ when $\{n_i|i=1,\ldots,c\}$ are sufficiently large. This implies
\begin{align*}
&\left\|\hat{\Sigma}_{w,(k)}-\hat{\Sigma}_{w}\right\|_{2}\\
\leq&2\epsilon_2\frac{\rho}{n}\left(\sum_{i}{\|\hat{\mu}_{i,(k)}-\hat{\mu}_{i}\|_{2}}\right)+\frac{1}{n}\sum_{i}{n_{i}\|\hat{\mu}_{i,(k)}-\hat{\mu}_{i}\|_{2}}^2+\frac{\|\Omega\|_{2}}{n}|\hat{\alpha}_{2,(k)}-\hat{\alpha}_{2}|\\
\leq &2\epsilon_2\frac{\rho}{n}\left(\sum_{i}{\|\hat{\mu}_{i,(k)}-\hat{\mu}_{i}\|_{2}}\right)+\frac{1}{n}\sum_{i}{n_{i}\|\hat{\mu}_{i,(k)}-\hat{\mu}_{i}\|_{2}}^2+\frac{\|\Omega\|_{2}M_0}{n}\left\|\hat{\Sigma}_{w,(k-1)}-\hat{\Sigma}_{w}\right\|_{2}.
\end{align*}

Now we have
\begin{align*}
&\sum_{i}{\|\hat{\mu}_{i,(k)}-\hat{\mu}_{i}\|_{2}} \\
\leq& \sum_{i}{\left(M_1\|\bar{\hat{x}}_{i,(k)}-\bar{\hat{x}}_{i}\|_{2}+M_2|\hat{\alpha}_{1,(k)}-\hat{\alpha}_{1}|+M_3\left\|\hat{\Sigma}_{w,(k-1)}-\hat{\Sigma}_{w}\right\|_{2}\right)}\\
\leq& \left(M_1M'_2+\frac{M_2}{Q}+2(cM_1M'_1\epsilon_1+M_3)\epsilon_2\frac{\rho}{n}\right)\sum_{i}{\|\left(\hat{\mu}_{i,(k-1)}-\hat{\mu}_{i}\right)\|_{2}}\\
&\hspace{10pt}+(cM_1M'_1\epsilon_1+M_3)\left(\frac{1}{n}\sum_{i}{n_{i}\|\hat{\mu}_{i,(k)}-\hat{\mu}_{i}\|_{2}}^2+\frac{1}{n}\|\Omega\|_{2}M_0\left\|\hat{\Sigma}_{w,(k-1)}-\hat{\Sigma}_{w}\right\|_{2}\right).
\end{align*} 
Since $M_1<1$, $M'_2<1$, and $Q=\frac{M_2}{2(1-M_1M'_2)}$, we have $M_1M'_2+\frac{M_2}{Q}<1$. Since $\epsilon_1, \epsilon_2$ can be taken arbitrarily small, $\hat{\mu}_{i,(k)}$ and $\hat{\Sigma}_{w,(k)}$ converges almost surely when $\{n_i|i=1,\ldots,c\}$ are sufficiently large. Convergence of the rest parameters can be easily deduced from these.
\end{proof}

\subsection{Penalized Discriminant Analysis (PDA) as a Special Case}
\label{PDA}

If we impose on the eigenfunctions of $\Sigma_W$ the following prior,
\begin{equation}
  \pi(\gamma_i)\propto e^{-\alpha\frac{1}{\lambda_{i}}\left\|d{\gamma_{i}}\right\|^{2}},
  \label{hbtprior}
\end{equation}
where $\gamma_{i}$ and $\lambda_{i}$ are eigenfuntions and eigenvalues of $\Sigma_{w}$ respectively and $d$ is a differential operator (for example, the Laplacian operator $\Delta$ in Cai et al. \cite{LSSS}), together with the Gaussian process assumption stated in Section 2, the corresponding maximum a posterior estimators become the solution of
\begin{equation}
\min_{\mu_{i},\Sigma_{w}}\sum_{i=1}^c{\sum_{j=1}^{n_i}{\frac{1}{n}\left(y_{i,j}-\mu_{i}\right)^{T}\Sigma_{w}^{-1}\left(y_{i,j}-\mu_{i}\right)}}+\ln{\left|\Sigma_{w}\right|}+\mbox{Tr}\left(\alpha\Omega\Sigma_{w}^{-1}\right),
\label{HBT}
\end{equation}
where $\Omega=D^TD$ and $D$ is the discretized version of the operator $d$.

And hence, by the first-order condition of (\ref{HBT}) we have:
\begin{eqnarray}
  \left\{\begin{array}{c}
    \displaystyle\sum_{j=1}^{n_i}{\Sigma_{w}^{-1}\left(y_{i,j}-\mu_{i}\right)}=0,\\
    \displaystyle -\Sigma_{w}^{-1}\left(\sum_{i=1}^c\sum_{j=1}^{n_i}{\frac{1}{n}\left(y_{i,j}-\mu_{i}\right)\left(y_{i,j}-\mu_{i}\right)^{T}}\right)\Sigma_{w}^{-1}+\Sigma_{w}^{-1}-\Sigma_{w}^{-1}\alpha\Omega\Sigma_{w}^{-1}=0,
  \end{array}\right.\\
  \Rightarrow\left\{\begin{array}{c}
    \displaystyle\hat{\mu}_{i}=\sum_{j=1}^{n_i}{y_{i,j}}=\bar{y}_{i},\\
    \displaystyle\hat{\Sigma}_{w}=\left(\sum_{i=1}^c\sum_{j=1}^{n_i}{\frac{1}{n}\left(y_{i,j}-\bar{y}_{i}\right)\left(y_{i,j}-\bar{y}_{i}\right)^{T}}\right)+\alpha\Omega=S_{yy}+\alpha\Omega,
  \end{array}\right.
\end{eqnarray}
where $\bar{y}_{i}$ are the sample class means and $S_{yy}$ is the sample covariance. The above equation implies that the discriminant direction is:
\begin{equation}
  \hat{\beta}\equiv\arg\max_{\beta}{\frac{\beta^{T}\hat{\Sigma}_{b}\beta}{\beta^{T}\left(S_{yy}+\alpha\Omega\right)\beta}},
\end{equation}
which is exactly the criterion of PDA in (\ref{PDAdir}).

Note that PDA do not impose any prior structure on mean functions $\mu_i$, the estimation of means $\mu_i$ and hence $\Sigma_{b}$ shall be less accurate than our proposed method. Since the estimation of $\Sigma_W$ also depends on the estimation of means, this estimate shall also be less accurate as well.

\section{Simulation Studies}\label{simulation}

In this section we investigate the performance of the proposed GP-LDA by conducting two simulations. We compare our GP-LDA approach with kernel discriminant analysis (KDA) with different kernel functions (radius basis function kernel, polynomial kernels with order 3, 4, and 5), penalized discriminant analysis (PDA), basis representation approaches including B-spline based approach \cite{James01}, functional PCA and functional PLS (that is, performing LDA after functional PCA/PLS). Among different approaches of functional PCA, we choose the penalized PCA (PPCA hereafter) by \cite{ppca} since it is one of the best functional PCA approach and is the only method that can be applied to multidimensional functions such as images. For functional PLS, we use the penalized PLS (PPLS hereafter) by \cite{ppls} for its best performance. The codes for KDA and PDA are downloaded from the author's website of \cite{LSSS}; the code for the B-spline based LDA is download from the author's website; the authors of PPLS \cite{ppls} provide their code as an R \cite{R} package \emph{ppls}. The code for PPCA is implemented by our own.

In the first simulation, we consider a simple two-classes curve discrimination which most of the available methods work well. This example is adapted from \cite{PredaCS07}: assume that
\begin{align*}
X_1(t)&=Uh_1(t)+(1-U)h_2(t)+\epsilon(t) \\
X_2(t)&=Uh_1(t)+(1-U)h_3(t)+\epsilon(t)
\end{align*}
where $X_1(t)$ and $X_2(t)$ are random curves of categories $1$ and $2$, respectively, $U$ is a random variable from Uniform$(0,1)$, $h_1(t)=\max\{6-|t-11|,0\}$, $h_2(t)=h_1(t-4)$, $h_3(t)=h_1(t+4)$ with $t$ being 101 equidistant points from $[1,21]$, and $\epsilon(t)$'s are i.i.d. standard normal distributed random noise. We consider 100 simulated samples of size $N$, with $N/2$ observations in each class. For each sample we generate additional 200 observations for testing, with $100$ observations in each class. Figure \ref{Preda07} displays a sample of simulated curves for each class.  Table \ref{simulate1_error_tab}-\ref{simulate1_SVM} present the averaged error rates with standard deviations by different approaches. We can observe that KDA and kernel SVM have significantly worse error rates regardless of the training sample size $N$, even when $N$ is large. This suggests that kernel tricks are not as efficient as the other functional approaches when data are observed form functions. Among different functional approaches, B-spline based LDA does not perform as good as the other competitors. This is not surprising since as the authors mentioned in their article, their approach may sometimes be confounding and could be trapped to local minima. All the other functional approaches share similar performances in this simulation, while our GP-LDA slightly outperforms the other approaches regardless of the sample size $N$.

\begin{figure}[!t]
\centering
\includegraphics[width=0.8\linewidth]{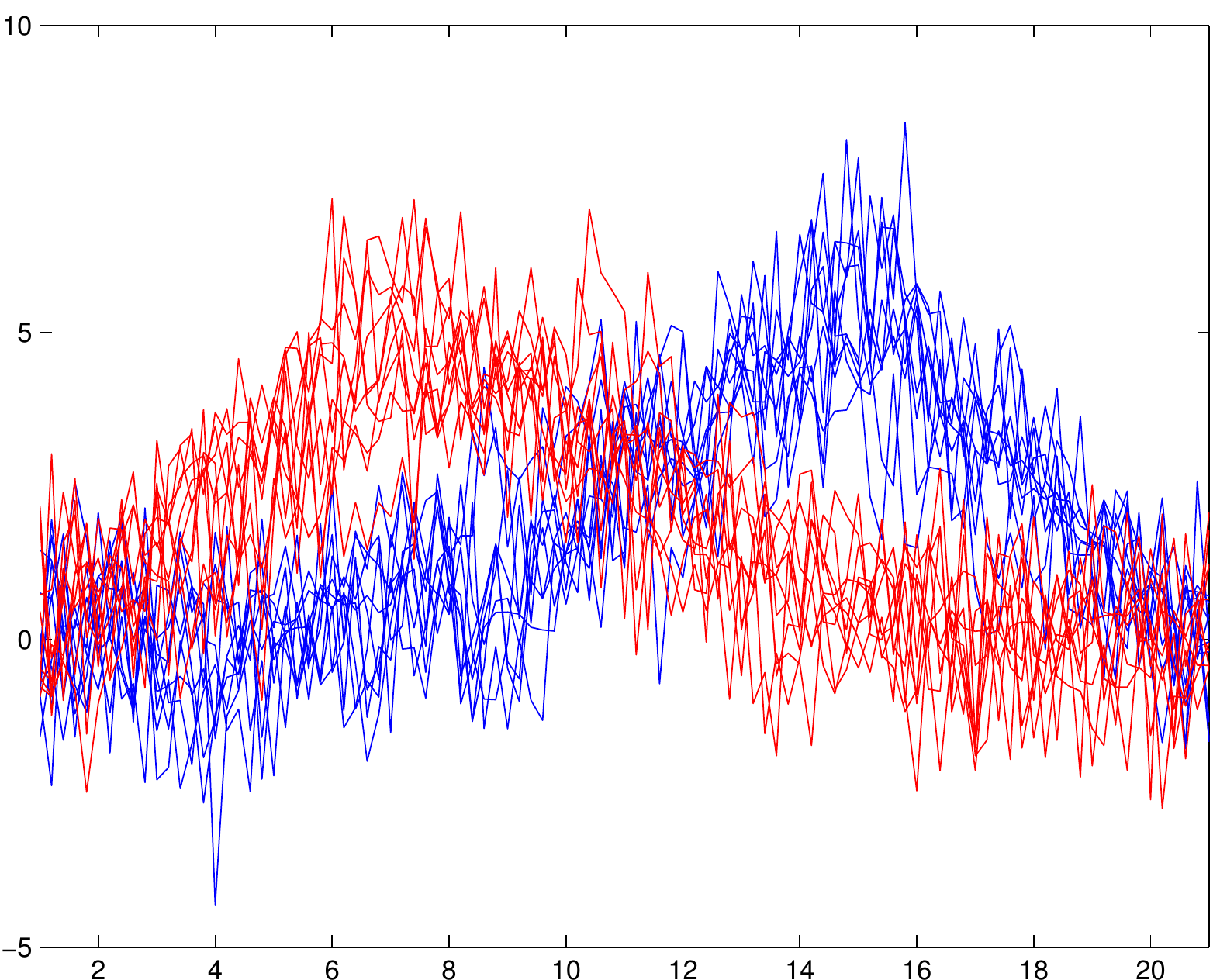}
\caption{Sample curves of the first simulation}
\label{Preda07}
\end{figure}

\begin{table}[]
\centering
\begin{tabular}{|r|ccccc|}
\hline
N & PPCA+LDA & PPLS+LDA & B-spline & PDA & GP-LDA \\
\hline \hline
50 & 4.53$\pm$2.01 & 4.71$\pm$3.06 & 5.72$\pm$1.92 & 4.73$\pm$2.10 & \bf{4.20$\pm$2.01} \\
200 & 2.75$\pm$1.53 & 2.87$\pm$1.56 & 3.63$\pm$1.34 & 3.05$\pm$1.78 & \bf{2.68$\pm$1.51} \\
800 & 2.41$\pm$1.11 & 2.41$\pm$1.11 & 2.75$\pm$1.3 & 2.41$\pm$1.27 & \bf{2.39$\pm$1.13} \\
\hline
\end{tabular}
\caption{Misclassification rates (mean$\pm$std\%) for different functional approaches in the first simulation.}
\label{simulate1_error_tab}
\end{table}

\begin{table}[]
\centering
\begin{tabular}{|r|cccc|}
\hline
N & RBF & poly (3) & poly (4) & poly (5)  \\
\hline \hline
50 & 4.89$\pm$1.82 & 4.55$\pm$1.77 & 4.81$\pm$1.86 & 4.56$\pm$1.85\\
200 & 4.02$\pm$1.66 & 3.80$\pm$1.61 & 4.20$\pm$1.52 & 3.98$\pm$1.58 \\
800 & 2.70$\pm$1.51 & 2.83$\pm$1.55 & 2.80$\pm$1.47 & 2.85$\pm$1.52 \\
\hline
\end{tabular}
\caption{Misclassification rates (mean$\pm$std\%) for KDA with different kernels in the first simulation. The radius basis kernel is denoted by RBF, and polynomial kernels of order $d$ are denoted by poly ($d$).}
\label{simulate1_kda}
\end{table}

\begin{table}[]
\centering
\begin{tabular}{|r|ccccc|}
\hline
N & linear & RBF & poly (3) & poly (4) & poly (5)  \\
\hline \hline
50 & 4.71$\pm$2.17 & 5.03$\pm$1.84 & 5.84$\pm$3.31 & 24.47$\pm$4.30 & 19.69$\pm$6.87 \\
200 & 3.22$\pm$1.33 & 4.30$\pm$1.66 & 4.11$\pm$1.32 & 18.38$\pm$3.84 & 8.91$\pm$4.75 \\
800 & 2.69$\pm$1.20 & 2.91$\pm$1.41 & 2.90$\pm$1.22 & 4.05$\pm$2.83 & 2.91$\pm$1.38 \\
\hline
\end{tabular}
\caption{Misclassification rates (mean$\pm$std\%) for SVM with different kernels in the first simulation. The radius basis kernel is denoted by RBF, and polynomial kernels of order $d$ are denoted by poly ($d$).}
\label{simulate1_SVM}
\end{table}

In the second simulation, we consider the case that (functional) PCA will fail. Assume that
\begin{align*}
X_1(t)&=\sin(2\pi t)/4+Z\cdot\sin(4\pi t)+\epsilon(t) \\
X_2(t)&=Z\cdot\sin(4\pi t)+\epsilon(t)
\end{align*}
where $X_1(t)$ and $X_2(t)$ are random curves of class 1 and 2, $Z$ is a standard normal r.v., $t$'s are 100 equidistant points from $[0,1]$, and $\epsilon(t)$'s are i.i.d. normally distributed random noise with mean $0$ and variance $0.1$. Notice that in this case the mean difference between class 1 and 2 is $\sin(2\pi t)/4$, which is orthogonal to the common basis function $\sin(4\pi t)$. Eigen-decomposition to the total covariance function yields to two eigenfuncitons: $\sin(4\pi t)$ with eigenvalue $1$ and $\sin(2\pi t)$ with eigenvalue $1/16$. Thus PCA tends to pick up $\sin(4\pi t)$ as the first principal component and may neglect $\sin(2\pi t)$ since it explains only less than $6\%$ of total variance.  However, projecting both $X_1(t)$ and $X_2(t)$ to the first principal component $\sin(4\pi t)$ gives the same result $Z\sim N(0,1)$ and hence cannot be discriminated. We consider 100 simulated samples of size $N$, with $N/2$ observations in each class. For each sample we generate additional 200 observations for testing, with $100$ observations in each class. Figure \ref{Yang13} displays a sample of simulated curves for each class. Table \ref{simulate2_error_tab}-\ref{simulate2_SVM} present the averaged error rates with standard deviations by different approaches. As we expected, LDA after functional PCA has similar performances to merely random guesses, while functional PLS can find basis functions that are somehow discriminative. KDA and kernel SVM also perform poorly in this case regardless which kernel is used, which confirms our argument again that kernel tricks are helpless for functional data. In this case B-spline based LDA, PDA and our GP-LDA have significantly lower misclassification rates to the other approaches regardless of sample size $N$, and our GP-LDA still has the best performance.

\begin{figure}[!t]
\centering
\includegraphics[width=0.8\linewidth]{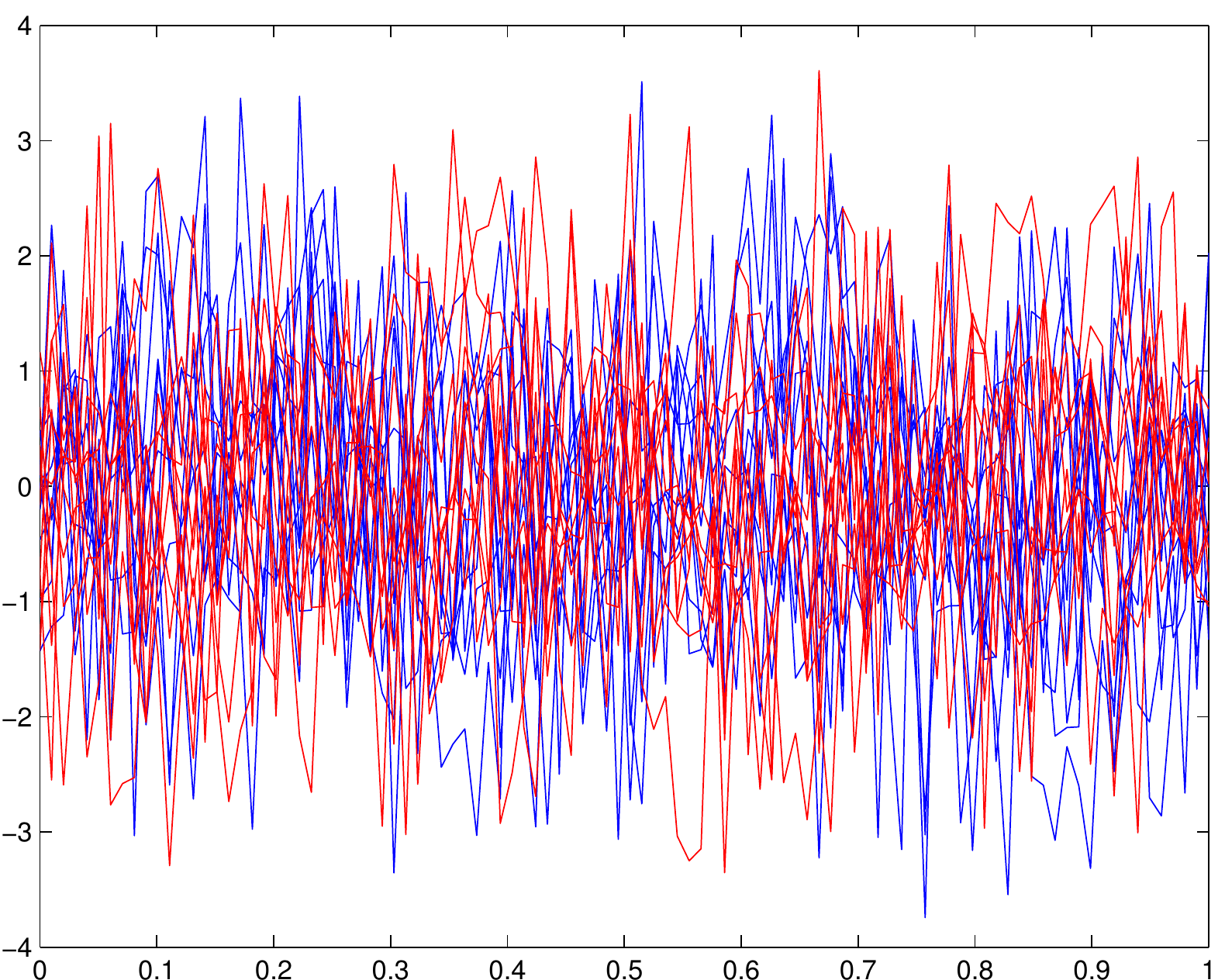}
\caption{Sample curves of the second simulation}
\label{Yang13}
\end{figure}

\begin{table}[]
\centering
\begin{tabular}{|r|ccccc|}
\hline
N & PPCA & PPLS &B-spline & PDA & GP-LDA \\
\hline \hline
20 & 48.8$\pm$4.50 & 46.9$\pm$4.84 & 44.33$\pm$4.96 & 43.78$\pm$5.89 & \bf{41.75$\pm$4.12} \\
50 & 50.00$\pm$3.80 & 45.41$\pm$5.21 & 41.76$\pm$4.18 & 40.73$\pm$4.15 & \bf{39.6$\pm$3.46} \\
200 & 50.00$\pm$3.12 & 39.25$\pm$4.76 & 38.36$\pm$3.22 & 37.68$\pm$3.36 & \bf{36.83$\pm$3.21} \\
\hline
\end{tabular}
\caption{Misclassification rates (mean$\pm$std\%) for the second simulation.}
\label{simulate2_error_tab}
\end{table}

\begin{table}[]
\centering
\begin{tabular}{|r|cccc|}
\hline
N & RBF & poly (3) & poly (4) & poly (5)  \\
\hline \hline
20 & 48.55$\pm$3.54 & 48.86$\pm$3.50 & 50.05$\pm$3.24 & 49.03$\pm$3.56 \\ 
50 & 47.05$\pm$4.52 & 47.45$\pm$4.13 & 49.70$\pm$3.29 & 48.95$\pm$3.80 \\
200 & 45.28$\pm$3.73 & 45.05$\pm$3.81 & 49.77$\pm$3.55 & 46.06$\pm$3.82 \\
\hline
\end{tabular}
\caption{Misclassification rates (mean$\pm$std\%) for KDA with different kernels in the second simulation. The radius basis kernel is denoted by RBF, and polynomial kernels of order $d$ are denoted by poly ($d$).}
\label{simulate2_kda}
\end{table}

\begin{table}[]
\centering
\begin{tabular}{|r|ccccc|}
\hline
N & linear & RBF & poly (3) & poly (4) & poly (5)  \\
\hline \hline
20 & 48.57$\pm$3.73 & 48.73$\pm$3.43 & 49.51$\pm$2.71 & 50.35$\pm$2.18 & 49.98$\pm$1.98 \\
50 & 47.43$\pm$3.86 & 47.43$\pm$4.18 & 48.72$\pm$3.07 & 50.00$\pm$2.29 & 49.62$\pm$1.60 \\
200 & 45.28$\pm$3.60 & 45.28$\pm$3.91 & 46.08$\pm$3.74 & 50.01$\pm$2.39 & 49.00$\pm$2.20 \\
\hline
\end{tabular}
\caption{Misclassification rates (mean$\pm$std\%) for SVM with different kernels in the second simulation. The radius basis kernel is denoted by RBF, and polynomial kernels of order $d$ are denoted by poly ($d$).}
\label{simulate2_SVM}
\end{table}

\section{Real Applications}\label{realapp}

In this section we demonstrate the performances of the the proposed GP-LDA on four different tasks, namely: speech recognition, spectrum classification, face recognition and object categorization. The Phoneme \cite{pda,est2} dataset is used for the speech recognition task, which consists 4509 speech frames of five phonemes (872 frames for "she", 757 frames for "dark", 1163 frames for the vowel in "she", 695 frames for the vowel in "dark", and 1022 frames for the first vowel in "water".) All the speech frames were transformed into log-periodograms of length 256, which is one of several widely used methods for casting speech data in a form suitable for speech recognition. The Phoneme dataset can be found in the website of \cite{est2}. Fig. \ref{phoneme-samples} illustrates some sample log-periodograms of the Phoneme dataset.

\begin{figure}[!t]
\centering
\includegraphics[width=1\linewidth]{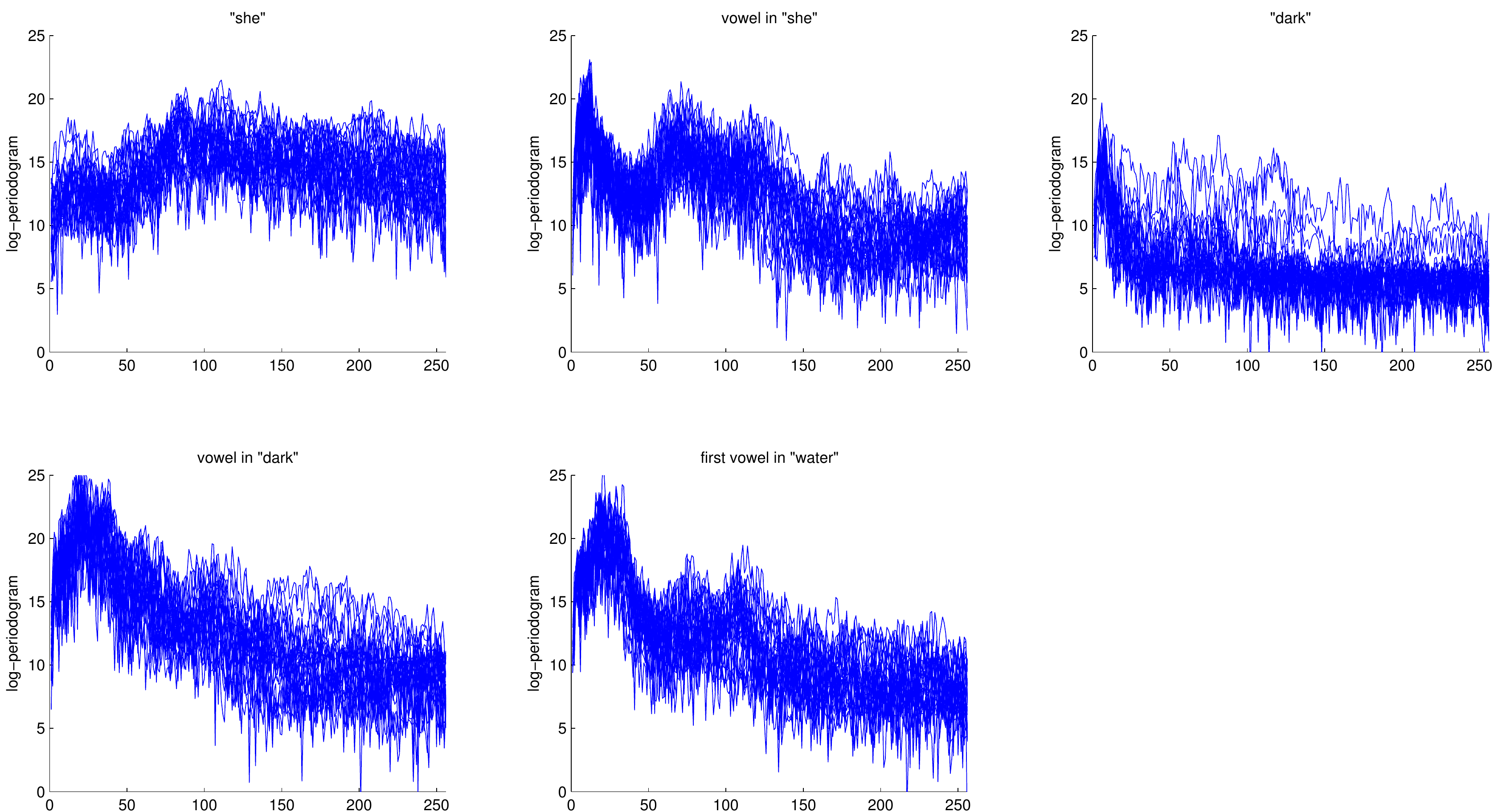}
\caption{Sample log-periodograms of the Phoneme dataset.}
\label{phoneme-samples}
\end{figure}

The wheat moisture spectra dataset \cite{Kalivas97} is used for the spectrum classification task. This dataset contains near-infrared reflectance spectra of 100 wheat samples, measured in 2 nm intervals from 1100 to 2500nm (of length 701), as well as the samples' moisture content. We treat the samples whose moisture content are less than 14 as low-moisture samples (41/100), while those whose moisture content are larger than 14 as high-moisture samples (59/100). The wheat moisture spectra dataset can be found in the R package \emph{fds} \cite{fds}. Fig. \ref{wheat-spectra} illustrates the spectra of the wheat moisture spectra dataset.

\begin{figure}[!t]
\centering
\includegraphics[width=0.6\linewidth]{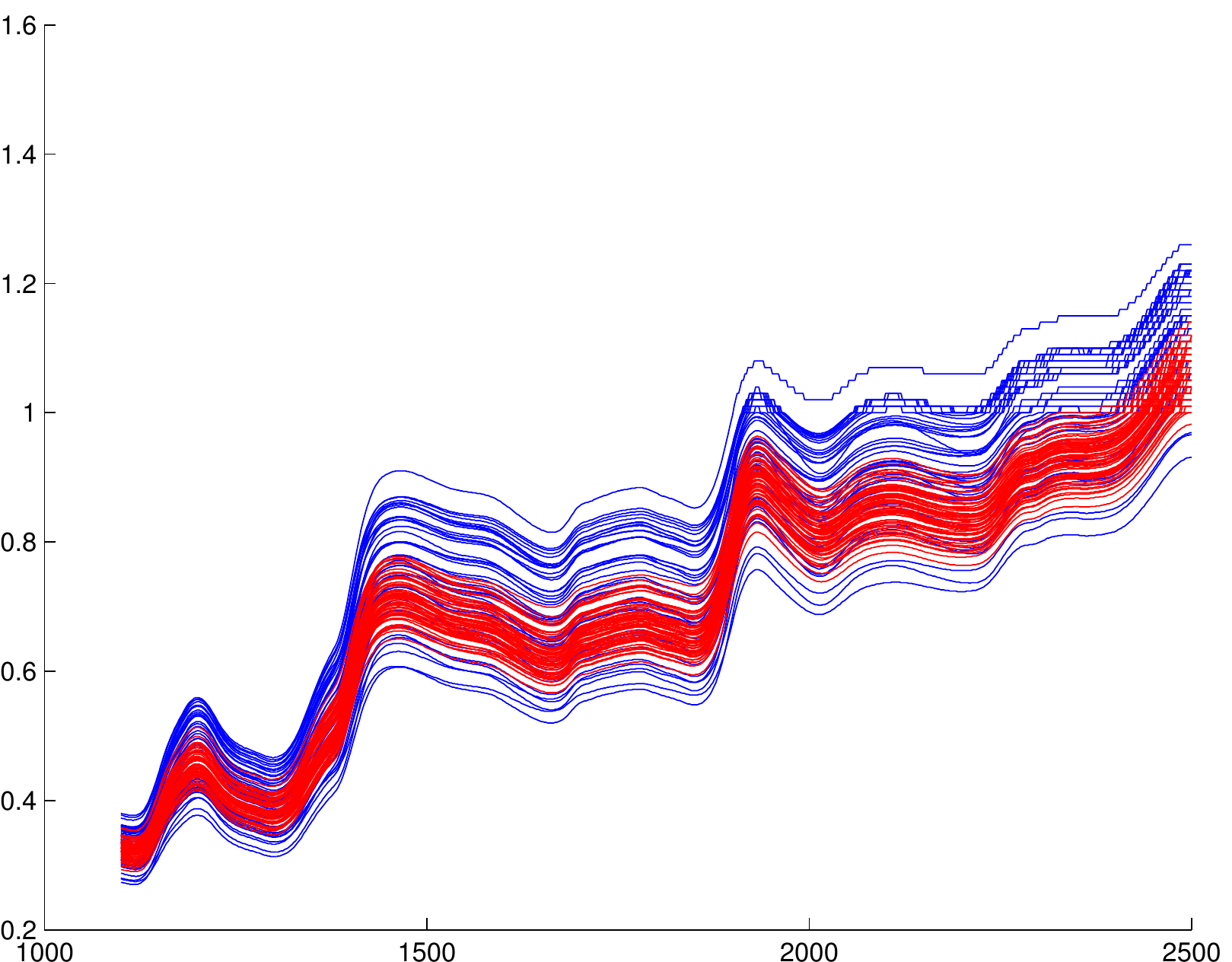}
\caption{The spectra of the wheat moisture spectra dataset. Blue: low-moisture samples; red: high-moisture samples.}
\label{wheat-spectra}
\end{figure}

The Yale database \cite{croppedYale} is used for the face recognition task, which contains 165 gray scale images of 15 individuals, each of 11 images with different lighting conditions and facial expressions (normal, happy, sad, sleepy, surprised, and wink). The ETH-80 \cite{ETH80} dataset is used for the object categorization, which contains 3280 images of 8 categories. Each category contains 10 different objects with 41 views per object. All the images in both datasets are well aligned and cropped. Each cropped image is resized to $32\times32$ pixels, with 256 gray levels per pixel. We rescale the pixel values to $[0,1]$ (divided by 255). Sample images of Yale and ETH-80 database are shown in Fig. \ref{Yale-images} and \ref{ETH80-images}.

\begin{figure}[!t]
\centering
\includegraphics[width=1\linewidth]{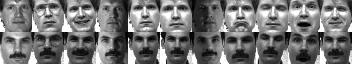}
\caption{Some sample images from Yale database}
\label{Yale-images}
\end{figure}

\begin{figure}[!t]
\centering
\includegraphics[width=0.6\linewidth]{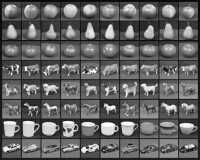}
\caption{Some sample images from ETH-80 data set}
\label{ETH80-images}
\end{figure}

Each dataset is then partitioned into the gallery (training) and probe (testing) set with different numbers. For the Phoneme dataset, $m$ log-periodograms per phoneme are randomly selected for training, and all of the remaining log-periodograms are used for testing. For the wheat moisture spectra dataset, $m$ spectra per class are randomly selected for training, and all of the remaining spectra are used for testing. For the Yale database, $m$ images per person are randomly selected for training, and the remaining images are for testing. For the ETH-80 dataset, $m$ images per \emph{category} are randomly selected for training, and the remaining images are for testing. Note that in our setting the training set may not contain all the 10 objects for each category.

For speech recognition and spectrum classification tasks, we compare our GP-LDA method (Section \ref{gplda}) with the following algorithms: KDA with different kernel functions (radius basis and polynomial kernels of order 3, 4, and 5), PDA, PPCA+LDA, and PPLS+LDA. For image recognition tasks, we compare our GP-LDA method (Section \ref{gplda}) with the following algorithms: Fisherface \cite{fisherface} as the baseline, KDA with different kernel functions, PPCA+LDA, and PDA. PPLS is excluded here since the code available does not support dealing with images. For all the tasks, SVM different kernel functions (linear, radius basis, and polynomial kernels of order 3, 4, and 5) are also used for comparison. The tuning parameters for our GP-LDA are estimated as described in Section \ref{gplda} with hyper-parameters $a_1=a_2=1$, $b_1=20$, and $b_2=100$ to achieve better performance. The tuning parameters for the other methods (if any) are selected by cross-validations.

The code for Fisherface is downloaded from the author's website of \cite{LSSS}. For SVM, we use the R package \emph{e1071} \cite{e1071}, a R-wrapper for LIBSVM \cite{cc01a}.

\subsection{Speech Recognition}

The misclassification rates of different algorithms on Phoneme dataset are listed in Tables \ref{Phoneme_results}-\ref{Phoneme_svm}. Table \ref{Phoneme_results} lists the misclassification rates of different functional approaches. From this table we can observe that PPCA+LDA does not work well on this dataset, while PDA and PPLS+LDA share similar results. The proposed GP-LDA method significantly outperforms all the other functional approaches, especially when the sample size becomes larger.

Tables \ref{Phoneme_kda} and \ref{Phoneme_svm} list the misclassification rates of KDA and SVM with different kernels. We can observe that both KDA and SVM with RBF kernel provide reasonable performances on this dataset, while the polynomial kernels with various orders perform poorly, especially as the order increases. Although RBF kernel provides slightly better results to linear kernel, however, the difference is not significant. Finally, our GP-LDA still significantly outperforms both KDA and SVM on this dataset, no matter which kernel function is used.

\begin{table}[htdp]
\begin{center}
\begin{tabular}{|r|cccc|}
\hline
Gallery Size & PDA & PPCA+LDA & PPLS+LDA & GP-LDA \\
\hline \hline
25 & 10.94$\pm$0.98 & 17.29$\pm$1.49 & 10.93$\pm$1.66 & \textbf{10.30$\pm$1.06} \\
50 & 10.20$\pm$0.81 & 16.38$\pm$0.78 & 10.02$\pm$0.53 & \textbf{8.54$\pm$0.51} \\
100 & 10.22$\pm$0.73 & 16.33$\pm$0.62 & 9.69$\pm$0.48 & \textbf{7.98$\pm$0.46} \\
\hline
\end{tabular}
\end{center}
\caption{Misclassification rates for PDA, PPCA+LDA, PPLS+LDA, and the proposed GP-LDA (mean$\pm$std\%) on the Phoneme dataset.}
\label{Phoneme_results}
\end{table}

\begin{table}[htdp]
\begin{center}
\begin{tabular}{|r|cccc|}
\hline
Gallery Size & RBF & poly (3) & poly (4) & poly (5) \\
\hline \hline
25 & 10.70$\pm$1.05 & 22.67$\pm$2.32 & 28.25$\pm$2.53 & 34.72$\pm$2.61 \\
50 & 10.47$\pm$0.81 & 21.96$\pm$1.96 & 26.44$\pm$1.77 & 32.43$\pm$2.08 \\
100 & 10.32$\pm$0.65 & 17.94$\pm$1.12 & 22.51$\pm$1.30 & 28.55$\pm$1.38 \\
\hline
\end{tabular}
\end{center}
\caption{Misclassification rates (mean$\pm$std\%) for KDA with different kernels on the Phoneme dataset. The radius basis kernel is denoted by RBF, and polynomial kernels of order $d$ are denoted by poly ($d$).}
\label{Phoneme_kda}
\end{table}

\begin{table}[htdp]
\begin{center}
\begin{tabular}{|r|ccccc|}
\hline
Gallery Size & linear & RBF & poly (3) & poly (4) & poly (5) \\
\hline \hline
25 & 11.22$\pm$1.21 &  10.69$\pm$1.22 & 13.32$\pm$2.38 & 22.97$\pm$4.52 & 24.15$\pm$5.96 \\
50 & 10.49$\pm$0.90 &  9.88$\pm$0.84 & 10.51$\pm$0.90 & 17.00$\pm$2.51 & 17.06$\pm$4.16 \\
100 & 10.48$\pm$0.71 & 9.85$\pm$0.63 & 10.50$\pm$0.59 & 13.58$\pm$1.36 & 13.02$\pm$2.54 \\
\hline
\end{tabular}
\end{center}
\caption{Misclassification rates (mean$\pm$std\%) for SVM with different kernels on the Phoneme dataset. The radius basis kernel is denoted by RBF, and polynomial kernels of order $d$ are denoted by poly ($d$).}
\label{Phoneme_svm}
\end{table}

\subsection{Spectra Classification}

The misclassification rates of different algorithms on the wheat moisture spectra dataset are listed in Tables \ref{wheat_results}-\ref{wheat_svm}. PPCA+LDA does not perform well on this dataset, while the other functional approaches provide reasonable results. Linear SVM also provides satisfying performance on this dataset. However, kernel tricks may not be appropriate for this dataset: neither KDA nor kernel SVM provide reasonable results on this dataset, no matter which kernel function is used.

\begin{table}[htdp]
\begin{center}
\begin{tabular}{|r|ccccc|}
\hline
Gallery Size & PDA & PPCA+LDA & PPLS+LDA & GP-LDA & \\
\hline \hline
20 & 1.77$\pm$1.95 & 30.25$\pm$5.75 & 0.25$\pm$0.18 & \textbf{0.13$\pm$0.15} & \\
\hline
\end{tabular}
\end{center}
\caption{Misclassification rates (mean$\pm$std\%) on the wheat moisture spectra dataset.}
\label{wheat_results}
\end{table}

\begin{table}[htdp]
\begin{center}
\begin{tabular}{|r|cccc|}
\hline
Gallery Size & RBF & poly (3) & poly (4) & poly (5) \\
\hline \hline
20 & 2.66$\pm$0.71 & 4.40$\pm$1.93 & 7.60$\pm$4.12 & 11.00$\pm$4.72 \\
\hline
\end{tabular}
\end{center}
\caption{Misclassification rates (mean$\pm$std\%) for KDA with different kernels on the wheat moisture spectra dataset. The radius basis kernel is denoted by RBF, and polynomial kernels of order $d$ are denoted by poly ($d$).}
\label{wheat_kda}
\end{table}

\begin{table}[htdp]
\begin{center}
\begin{tabular}{|r|ccccc|}
\hline
Gallery Size & linear & RBF & poly (3) & poly (4) & poly (5) \\
\hline \hline
20 & 0.37$\pm$0.15 & 1.70$\pm$0.32 & 5.78$\pm$1.35 & 7.65$\pm$3.91 & 11.50$\pm$4.52 \\
\hline
\end{tabular}
\end{center}
\caption{Misclassification rates (mean$\pm$std\%) for SVM with different kernels on the wheat moisture spectra dataset. The radius basis kernel is denoted by RBF, and polynomial kernels of order $d$ are denoted by poly ($d$).}
\label{wheat_svm}
\end{table}

\subsection{Face Recognition}

The misclassification rates of different algorithms on Yale database are listed in Tables \ref{Yale_Constant}-\ref{Yale_SVM}. Note that the performances for PDA in table \ref{Yale_Constant} are better than the result reported in the original paper \cite{LSSS}. Table \ref{Yale_Constant} shows that PPCA+LDA works slightly better than Fisherface when the training sample size is small; however, when the training sample size becomes larger, it shares similar performances with Fisherface. PDA significantly outperforms Fisherface and PPCA+LDA, especially when the training sample size becomes larger. This implies that functional assumption does provide significant improvement on this dataset. The proposed GP-LDA method generally outperforms all the other approaches when the training sample size is moderately large, which suggests that proper modeling of functional assumption is really important for this dataset.

The misclassification rates of KDA and SVM with different kernels are listed in tables \ref{Yale_KDA} and \ref{Yale_SVM}. We can observe that KDA with RBF kernel does not outperform Fisherface, while KDA with polynomial kernels provide worse results as the order increases. Oh the other hand, for SVM the linear kernel, RBF kernel, and polynomial kernel with order 3 provide similar results, and polynomial kernels with orders 4 and 5 perform significantly better. Finally, the proposed GP-LDA in table \ref{Yale_Constant} still provide significantly better results to both KDA and SVM no matter which kernel function is used, especially when the training sample size is moderately large.

\begin{table}[htdp]
\begin{center}
\begin{tabular}{|r|cccc|}
\hline
Gallery Size & Fisherface & PPCA & PDA & GP-LDA \\
\hline \hline
2 & 44.44$\pm$4.6 & 41.12$\pm$4.62 & 40.49$\pm$4.99 & 34.89$\pm$4.8 \\
3 & 33.82$\pm$4.17 & 32.49$\pm$4.57 & 28.22$\pm$3.87 & 23.37$\pm$3.85 \\
4 & 27.75$\pm$4.79 & 27.40$\pm$3.78 & 22.82$\pm$4.03 & \textbf{18.23$\pm$3.53} \\
5 & 23.80$\pm$4.38 & 23.44$\pm$3.85 & 17.62$\pm$3.46 & \textbf{14.24$\pm$3.79} \\
6 & 20.61$\pm$4.13 & 21.00$\pm$4.54 & 14.03$\pm$4.10 & \textbf{11.55$\pm$3.58} \\
7 & 19.73$\pm$4.35 & 19.73$\pm$4.35 & 12.90$\pm$4.04 & \textbf{10.4.$\pm$3.68} \\
8 & 18.31$\pm$4.22 & 18.31$\pm$4.22 & 10.58$\pm$4.80 & \textbf{8.04$\pm$4.49} \\
\hline
\end{tabular}
\end{center}
\caption{Misclassification rates for PDA, PPCA+LDA, PPLS+LDA, and the proposed GP-LDA (mean$\pm$std\%) on Yale database.}
\label{Yale_Constant}
\end{table}

\begin{table}[htdp]
\begin{center}
\begin{tabular}{|r|cccc|}
\hline
Gallery Size & RBF & poly (3) & poly (4) & poly (5) \\
\hline \hline
2 & 44.77$\pm$3.85 & 51.54$\pm$4.89 & 56.76$\pm$5.58 & 61.51$\pm$5.77\\
3 & 34.70$\pm$3.65 & 42.53$\pm$4.60 & 48.85$\pm$4.48 & 54.43$\pm$4.96\\
4 & 29.10$\pm$4.09 & 36.04$\pm$3.99 & 42.99$\pm$4.27 &49.37$\pm$4.51 \\
5 & 23.67$\pm$3.69 & 31.71$\pm$4.09 & 38.27$\pm$4.17 & 45.38$\pm$5.10 \\
6 & 22.24$\pm$3.91 & 29.04$\pm$3.86 & 35.71$\pm$4.16 & 42.53$\pm$4.52 \\
7 & 20.19$\pm$3.95 & 23.33$\pm$4.34 & 33.70$\pm$4.95 & 40.43$\pm$4.59 \\
8 & 18.36$\pm$4.28 & 25.51$\pm$5.14 & 31.42$\pm$5.42 & 38.80$\pm$5.58 \\
\hline
\end{tabular}
\end{center}
\caption{Misclassification rates (mean$\pm$std\%) for KDA with different kernels on Yale database. The radius basis kernel is denoted by RBF, and polynomial kernels of order $d$ are denoted by poly ($d$).}
\label{Yale_KDA}
\end{table}

\begin{table}[htdp]
\begin{center}
\begin{tabular}{|r|ccccc|}
\hline
Gallery Size & linear & RBF & poly (3) & poly (4) & poly (5) \\
\hline \hline
2 & 44.33$\pm$4.27 & 44.76$\pm$4.20 & 48.18$\pm$5.77 & 35.31$\pm$5.30 & \textbf{34.09 $\pm$5.58} \\
3 & 37.68$\pm$4.09 & 37.63$\pm$3.92 & 44.10$\pm$5.75 & 28.94$\pm$3.87 & \textbf{22.94$\pm$4.47} \\
4 & 32.50$\pm$2.95 & 32.36$\pm$3.18 & 40.83$\pm$4.89 & 25.92$\pm$5.25 & 20.16$\pm$5.64 \\
5 & 29.07$\pm$3.24 & 28.64$\pm$3.71 & 35.14$\pm$5.74 & 21.63$\pm$4.60 & 16.18$\pm$4.58 \\
6 & 27.87$\pm$3.83 & 27.57$\pm$3.89 & 30.88$\pm$4.44 & 19.34$\pm$3.68 & 14.90$\pm$2.69 \\
7 & 25.87$\pm$4.18 & 26.571$\pm$4.40 & 26.29$\pm$5.00 & 15.62$\pm$3.58 & 13.33$\pm$4.90 \\
8 & 23.33$\pm$5.12 & 23.64$\pm$5.27 & 20.60$\pm$5.25 & 12.17$\pm$3.53 & 16.46$\pm$4.90 \\
\hline
\end{tabular}
\end{center}
\caption{Misclassification rates (mean$\pm$std\%) for SVM with different kernels on Yale database. The radius basis kernel is denoted by RBF, and polynomial kernels of order $d$ are denoted by poly ($d$).}
\label{Yale_SVM}
\end{table}

\subsection{Object Categorization}

The misclassification rates of different algorithms on ETH-80 dataset are listed in Tables \ref{ETH80_results}-\ref{ETH80_SVM}. From table \ref{ETH80_results} we can find that PDA significantly outperforms Fisherface and KDA, and PPCA+LDA works almost as good as PDA on this dataset. The proposed GP-LDA method significantly outperforms all the other functional approaches. 

From table \ref{ETH80_KDA} we can see that the selection of kernels is not important for KDA on this dataset. Furthermore, KDA works better than Fisherface, but it is still not as good as functional approaches no matter which kernel is used.

Table \ref{ETH80_SVM} shows interesting results. SVM with linear and RBF kernels share similar results, and both of them work much better than all the LDA-based methods including KDA and our GP-LDA. This suggests that LDA-based methods may not be appropriate for this dataset. This could happen, for example, when the distribution of this dataset is skewed, etc.

\begin{table}[htdp]
\begin{center}
\begin{tabular}{|r|cccc|}
\hline
Gallery Size & Fisherface  & PPCA & PDA & GP-LDA \\
\hline \hline
20 & 48.08$\pm$1.9 & 38.56$\pm$1.68 & 34.70$\pm$1.84 & 30.32$\pm$1.62 \\
100 & 41.25$\pm$1.58 & 28.26$\pm$1.79 & 28.56$\pm$1.44 & 19.61$\pm$1.08 \\
\hline
\end{tabular}
\end{center}
\caption{Misclassification rates for PDA, PPCA+LDA, PPLS+LDA, and the proposed GP-LDA (mean$\pm$std\%) on ETH-80 dataset.}
\label{ETH80_results}
\end{table}

\begin{table}[htdp]
\begin{center}
\begin{tabular}{|r|cccc|}
\hline
Gallery Size & RBF & poly (3) & poly (4) & poly (5) \\
\hline \hline
20 & 45.29$\pm$3.81 & 45.55$\pm$3.63 & 45.21$\pm$3.63 & 45.57$\pm$3.65 \\
100 & 31.86$\pm$3.45 & 29.94$\pm$2.80 & 29.90$\pm$2.81& 30.43$\pm$2.86 \\
\hline
\end{tabular}
\end{center}
\caption{Misclassification rates (mean$\pm$std\%) for KDA with different kernels on ETH-80 dataset. The radius basis kernel is denoted by RBF, and polynomial kernels of order $d$ are denoted by poly ($d$).}
\label{ETH80_KDA}
\end{table}

\begin{table}[htdp]
\begin{center}
\begin{tabular}{|r|ccccc|}
\hline
Gallery Size & linear & RBF & poly (3) & poly (4) & poly (5) \\
\hline \hline
20 & 26.67$\pm$1.42  & \textbf{24.33$\pm$1.75} & 37.31$\pm$2.93 & 48.69$\pm$3.06 & 50.94$\pm$2.94 \\
100 & 15.24$\pm$0.71 & \textbf{14.37$\pm$0.74} & 16.67$\pm$1.01 & 26.45$\pm$1.43  & 32.20$\pm$1.62 \\
\hline
\end{tabular}
\end{center}
\caption{Misclassification rates (mean$\pm$std\%) for SVM with different kernels on ETH-80 dataset. The radius basis kernel is denoted by RBF, and polynomial kernels of order $d$ are denoted by poly ($d$).}
\label{ETH80_SVM}
\end{table}

\section{Conclusion Remarks}\label{conclusion}

In this paper, we introduce a Bayesian framework of Gaussian process for extending Fisher's discriminant to the data observed form smooth functions. Inspired from Bayesian smoothing, the smoothness assumption of observed functions are translated to priors on mean and the common covariance functions. Then, we derive our novel GP-LDA method as a maximum a posteriori probability (MAP) estimate within this framework. The advantage of our approach are of two-fold: first we introduce a more theoretical framework to incorporate the information of spatial-correlations for functional data. Second, within the Bayesian framework the tuning parameters can be estimated simultaneously with the other unknown parameters.  Experimental results on simulated and real datasets show that the proposed GP-LDA does outperform previous methods. Finally, our Bayesian Gaussian Process framework can easily combine with other extensions of Fisher's discriminant as well, such as multilinear LDA, MFA, etc.

\bibliographystyle{plain}
\bibliography{gplda-arxiv}

\end{document}